\newcommand{\overbar}[1]{\mkern 1.5mu\overline{\mkern-1.5mu#1\mkern-1.5mu}\mkern 1.5mu}
\NewCommandCopy{\oldciteauthor}{\citeauthor}
\renewcommand{\citeauthor}[1]{\oldciteauthor{#1} (\citeyear{#1})}
\newtheorem{prop}{Proposition}
\newcommand{\Exp}[1]{\mathbb{E}\left[#1\right]}
\DeclareMathOperator*{\argmax}{argmax}
\DeclareMathOperator{\sign}{sgn}
\newtheorem{theorem}{Theorem}
\title{Lookahead Pathology in Monte-Carlo Tree Search}
\author{
    Khoi P. N. Nguyen\textsuperscript{\rm 1},
    Raghuram Ramanujan\textsuperscript{\rm 2}
}
\begin{document}

\maketitle

\begin{abstract}
Monte-Carlo Tree Search (MCTS) is a search paradigm that first found prominence with its success in the domain of computer Go. Early theoretical work established the soundness and convergence bounds for Upper Confidence bounds applied to Trees (UCT), the most popular instantiation of MCTS; however, there remain notable gaps in our understanding of how UCT behaves in practice. In this work, we address one such gap by considering the question of whether UCT can exhibit \emph{lookahead pathology} in adversarial settings --- a paradoxical phenomenon first observed in Minimax search where greater search effort leads to worse decision-making. We introduce a novel family of synthetic games that offer rich modeling possibilities while remaining amenable to mathematical analysis. Our theoretical and experimental results suggest that UCT is indeed susceptible to pathological behavior in a range of games drawn from this family.
\end{abstract}

\section{Introduction}
\label{sec:intro}
 
Monte-Carlo Tree Search (MCTS) is an online planning framework that originally found widespread use in game-playing applications \cite{crazystone,cadiaplayer,hex}, culminating in the spectacular success of AlphaGo \cite{alphago,alphagozero}. MCTS-based approaches have since been successfully adapted to a broad range of other domains, including combinatorial search and optimization \cite{uctmip,uctmaxsat}, malware analysis \cite{malware}, knowledge extraction \cite{ke}, and molecule synthesis \cite{chem1}.  

Despite these high-profile successes, however, there are still aspects of the algorithm that remain poorly understood. Early theoretical work by \citeauthor{mcts} introduced the Upper Confidence bounds applied to Trees (UCT) algorithm, now the most widely used variant of MCTS. Their work established that in the limit, UCT correctly identified the optimal action in sequential decision-making tasks, with the regret associated with choosing sub-optimal actions increasing at a logarithmic rate \cite{mcts}. \citeauthor{coquelin_munos}, however, showed that in the worst-case scenario, UCT's convergence could take time super-exponential in the depth of the tree. More recent work by \citeauthor{new_mcts} proposes a ``corrected'' UCT with better convergence properties.

In parallel, there has been a line of experimental work that has attempted to understand the reasons for UCT's success in practice and characterize the conditions under which it may fail. One such effort considered the impact of \emph{shallow traps} --- highly tactical positions in games like Chess that can be established as wins for the opponent with a relatively short proof tree --- and argued that UCT tended to misevaluate such positions \cite{RSS_traps1,RS_traps2}. \citeauthor{giga} considered the performance of UCT in a set of artificial games and pinpointed \emph{optimistic moves}, a notion similar to shallow traps, as a potential Achilles heel. \citeauthor{James_Konidaris_Rosman} studied the role of random playouts, a key step in the inner loop of the UCT algorithm, and concluded that the smoothness of the payoffs in the application domain determined the effectiveness of playouts. Our work adds to this body of empirical research, but is concerned with a question that has thus far not been investigated in the literature: \emph{can UCT behave pathologically?} 

The phenomenon of \emph{lookahead pathology} was first discovered and analyzed in the 1980s in the context of planning in two-player adversarial domains \cite{beal_pathology,nau_pathology,pearl_pathology}. Researchers found that in a family of synthetic board-splitting games, deeper Minimax searches counter-intuitively led to worse decision-making. In this paper, we present a novel family of abstract, two-player, perfect information games, inspired by the properties of real games such as Chess, in which UCT-style planning displays lookahead pathology under a wide range of conditions.

\section{Background}
\label{sec:background}

\subsection{Monte-Carlo Tree Search}
\label{sec:mcts}

Consider a planning instance where an agent needs to determine the best action to take in a given state. An MCTS algorithm aims to solve this problem by iterating over the following steps to build a search tree.
\begin{itemize}
    \item \textbf{Selection:} Starting from the root node, we descend the tree by choosing an action at each level according to some policy $\pi$. UCT uses UCB1 \cite{ucb1}, an algorithm that optimally balances exploration and exploitation in the multi-armed bandit problem, as the selection policy. Specifically, at each state $s$, UCT selects the action $a = \pi(s)$ that maximizes the following upper confidence bound:
    $$ \pi(s) = \argmax_a \left( \overbar{Q}(T(s,a)) + c \cdot \sqrt{\frac{\log{n(s)}}{n(T(s,a))}} \right)$$
    Here, $T(s,a)$ is the transition function that returns the state that is reached from taking action $a$ in state $s$, $\overbar{Q}(s)$ is the current estimated utility of state $s$, and $n(s)$ is the visit count of state $s$. The constant $c$ is a tunable exploration parameter. In adversarial settings, the negamax transformation is applied to the UCB1 formula, to ensure that utilities are alternatingly maximized and minimized at successive levels of the search tree.
    \item \textbf{Evaluation:} The recursive descent of the search tree using $\pi$ ends when a node $s'$ that is previously unvisited, or that corresponds to a terminal state (i.e., one from which no further actions are possible), is reached. If $s'$ is non-terminal, then an estimate $R$ of its utility is calculated. This calculation may take the form of random playouts (i.e., the average outcome of pseudorandom completions of the game starting from $s'$), handcrafted heuristics, or the prediction of a learned estimator like a neural network. For terminal nodes, the true utility of the state is used as $R$ instead. The node $s'$ is then added to the search tree, so that the size of the search tree grows by one after each iteration.
    \item \textbf{Backpropagation:} Finally, the reward $R$ is used to update the visit counts and the utility estimates of each state $s$ that was encountered on the current iteration as follows:
    $$ \overbar{Q}(s) \leftarrow \frac{n(s)\overbar{Q}(s) + R(s)}{n(s) + 1} \qquad \quad n(s) \leftarrow n(s) + 1$$
    This update assigns to each state the average reward accumulated from every episode that passed through it.
\end{itemize}
We repeat the above steps until the designated computational budget is met; at that point, the agent selects the action $a = \argmax_{a'} \overbar{Q}(T(r, a'))$ to execute at the root node $r$.

\subsection{Lookahead Pathology}
\label{sec:pathology}

Searching deeper is generally believed to be more beneficial in planning domains. Indeed, advances in hardware that permitted machines to tractably build deeper Minimax search trees for Chess were a key reason behind the success of Deep Blue \cite{deep_blue}. However, there are settings in which this property is violated, such as the artificial P-games investigated by several researchers in the 1980s \cite{beal_pathology,nau_pathology,pearl_pathology} and which we describe in the following section. Over the years, many have attempted to explain the causes of pathology and why it is not encountered in real games like Chess. \citeauthor{path_survey} reconciled these different proposals and offered a unified explanation that focused on three factors: the branching factor of the game, the degree to which the game demonstrates \emph{local similarity} (a measure of the correlation in the utilities of nearby states in the game tree), and the \emph{granularity} of the heuristic function used (the number of distinct values that the heuristic takes on). They concluded that pathology was most pronounced in games with a high branching factor, low local similarity and low heuristic granularity \cite{path_survey}. They also found that while it was not a widespread issue in real games, certain sub-games of Chess and variations of Kalah could exhibit pathology. Finally, it is also worth noting that pathology has been observed in single-agent domains \cite{single-agent-pathology,RTS-pathology}, but our focus in this paper remains on the two-player setting where the phenomenon has been most extensively studied.

\subsection{Synthetic Game Tree Models}
\label{sec:tree-models}

\begin{figure*}[htb]
    \centering
    \includegraphics[width=0.33\textwidth]{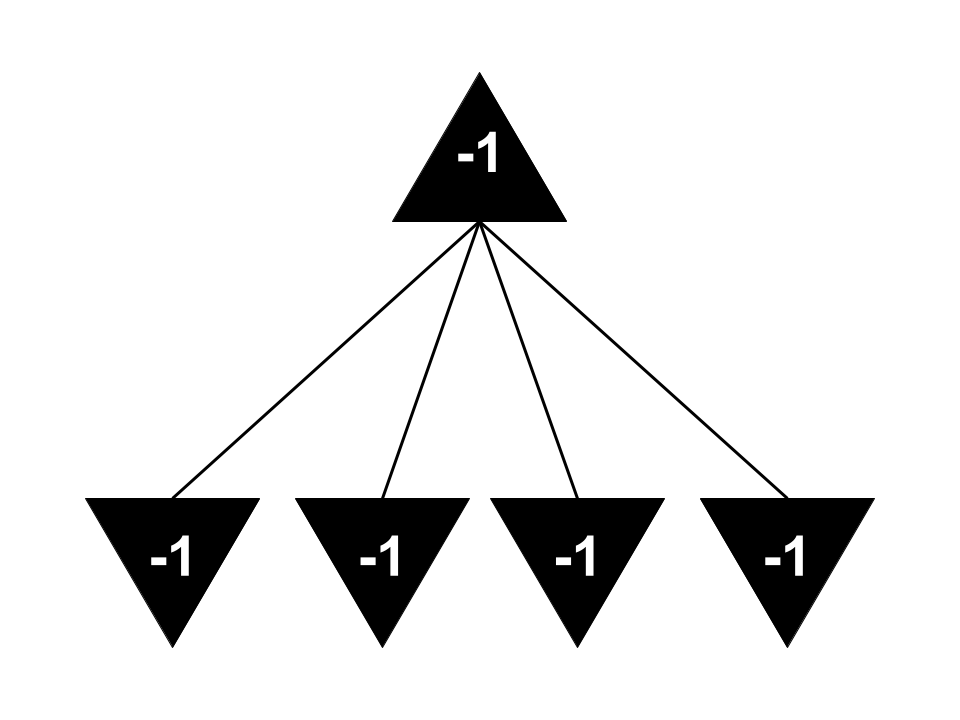}
    \qquad \qquad
    \includegraphics[width=0.33\textwidth]{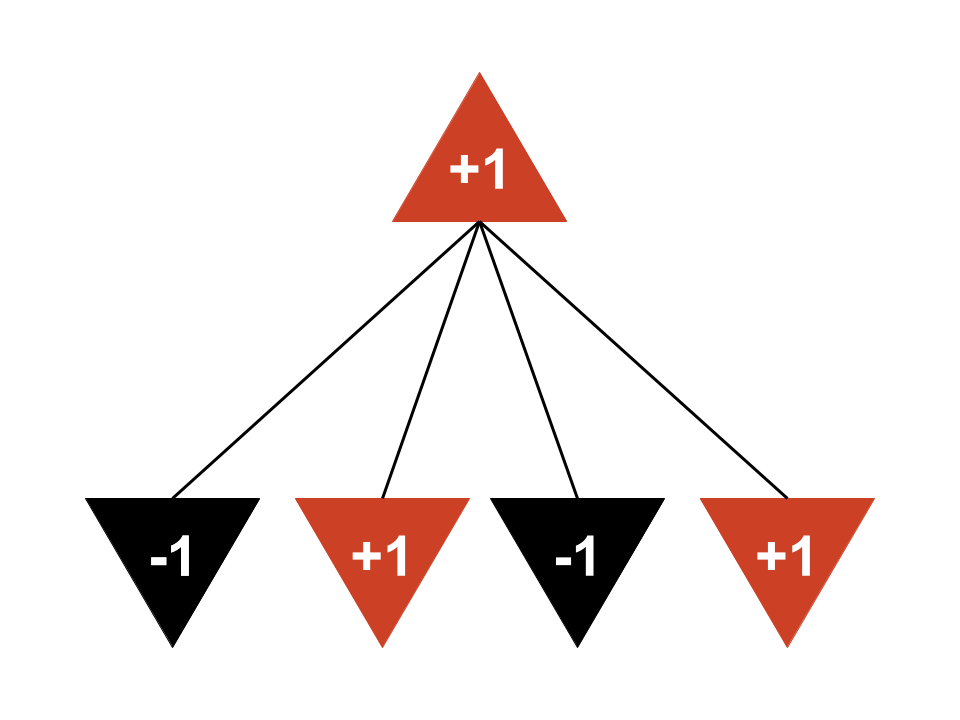}
    \caption{An example of a forced node (left) and a choice node (right). Upward-facing triangles represent maximizing nodes while downward-facing triangles represent minimizing nodes.}
    \label{fig:choice-forced}
\end{figure*}

There is a long tradition of using abstract, artificial games to empirically understand the behavior of search algorithms. The P-game model is a notable example, that constructs a game tree in a bottom-up fashion \cite{pearl80}. To create a P-game instance, the values of the leaves of the tree are carefully set to win/loss values \cite{pearl80}, though variants using real numbers instead have also been studied \cite{lustrek05}. The properties of the tree arise organically from
the distribution used to set the leaf node values. P-games were the subject of much interest in the 1980s, as the phenomenon of lookahead pathology was first discovered in the course of analyzing the behavior of Minimax search in this setting \cite{beal_pathology,nau_pathology}. While the model's relative simplicity allows for rigorous mathematical analysis, P-games also suffer from a couple of drawbacks. Firstly, computing the value of the game and the minimax value of the internal nodes requires search, and that all the leaf nodes of the tree be retained in memory, which restricts the size of the games that may be studied in practice. Secondly, the construction procedure only models a narrow class of games, namely, ones where the values of leaf nodes are independent of each other. 

Other researchers have proposed top-down models, where each internal node of the tree maintains some state information that is incrementally updated and passed down the tree. The value of a leaf node is then determined by a function of the path that was taken to reach it. For example, in the models studied by \citeauthor{nau83:probbackup} and \citeauthor{sk98}, values are assigned to the edges in the game tree and the utility of a leaf node is determined by the sum of the edge values on the path from the root node to the leaf. These models were used to demonstrate that correlations among sibling nodes were sufficient to eliminate lookahead pathology in Minimax. More recently, \citeauthor{socs11} used a similar top-down construction to evaluate the effectiveness of a distributed UCT implementation. However, search is still required to determine the true value of internal nodes in these models, thereby only allowing for the study of small games. \citeauthor{pvtrees} proposed \emph{prefix value trees} that extend the model of \oldciteauthor{sk98} by observing that the minimax value of nodes along a principal variation can never worsen for the player on move. Setting the values of nodes while obeying this constraint during top-down tree construction obviates the need for search, which allowed them to generate arbitrarily large games.

Finally, synthetic game tree models have also been used to study the behavior of MCTS algorithms like UCT. For example, \citeauthor{giga} used variations of Chess to identify the features of the search space that informed the success and failure of UCT. \citeauthor{RSS_critical_moves} studied P-games augmented with ``critical moves'' --- specific actions that an agent must get right at important junctures in the game to ensure victory. We refine this latter idea and incorporate it into a top-down model, which we present in the following section.

\section{Critical Win-Loss Games}
\label{sec:crit-win-loss}

Our goal in this paper is to determine some sufficient conditions under which UCT exhibits lookahead pathology. To conduct this study, we seek a class of games that satisfy several properties. Firstly, we desire a model that permits us to construct arbitrarily large games to more thoroughly study the impact of tree depth on UCT's performance. We note that most of the game tree models discussed in Section \ref{sec:tree-models} do not meet this requirement. The one exception is the prefix value tree model of \oldciteauthor{pvtrees} that, however, fails a different test: the ability to construct games with parameterizable difficulty. Specifically, we find that prefix value games are too ``easy'' as evidenced by the fact that a na\"{i}ve planning algorithm that combines minimal lookahead with purely random playouts achieves perfect decision-making accuracy in this setting (see Appendix \ref{sec:pvtrees}). In this section, we describe \emph{critical win-loss games}, a new generative model of extensive-form games, that addresses both these shortcomings of existing models.

\subsection{Game Tree Model}
\label{sec:model}

\begin{figure*}[htb]
\centering
\includegraphics[width=0.3\textwidth]{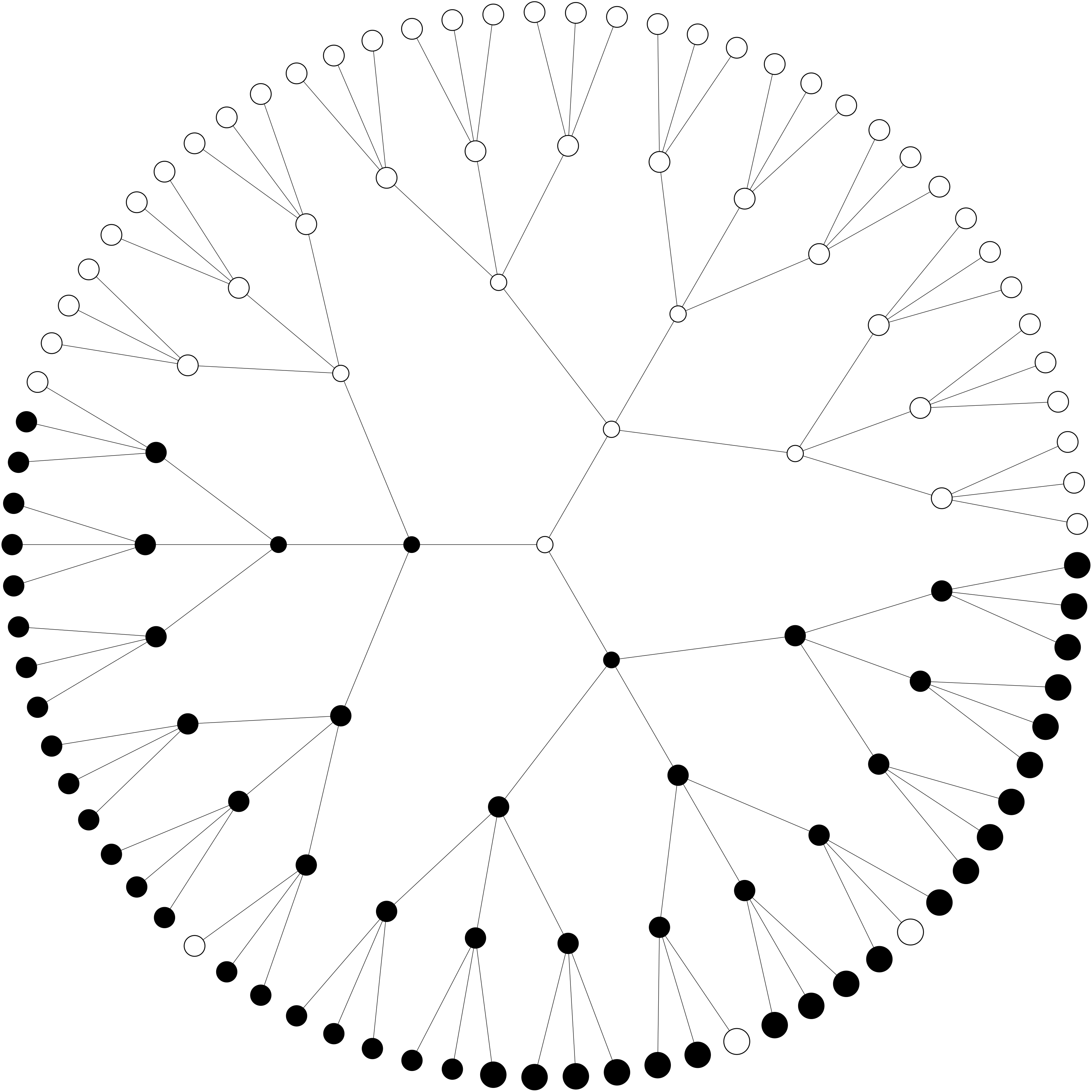}
\hfill
\includegraphics[width=0.3\textwidth]{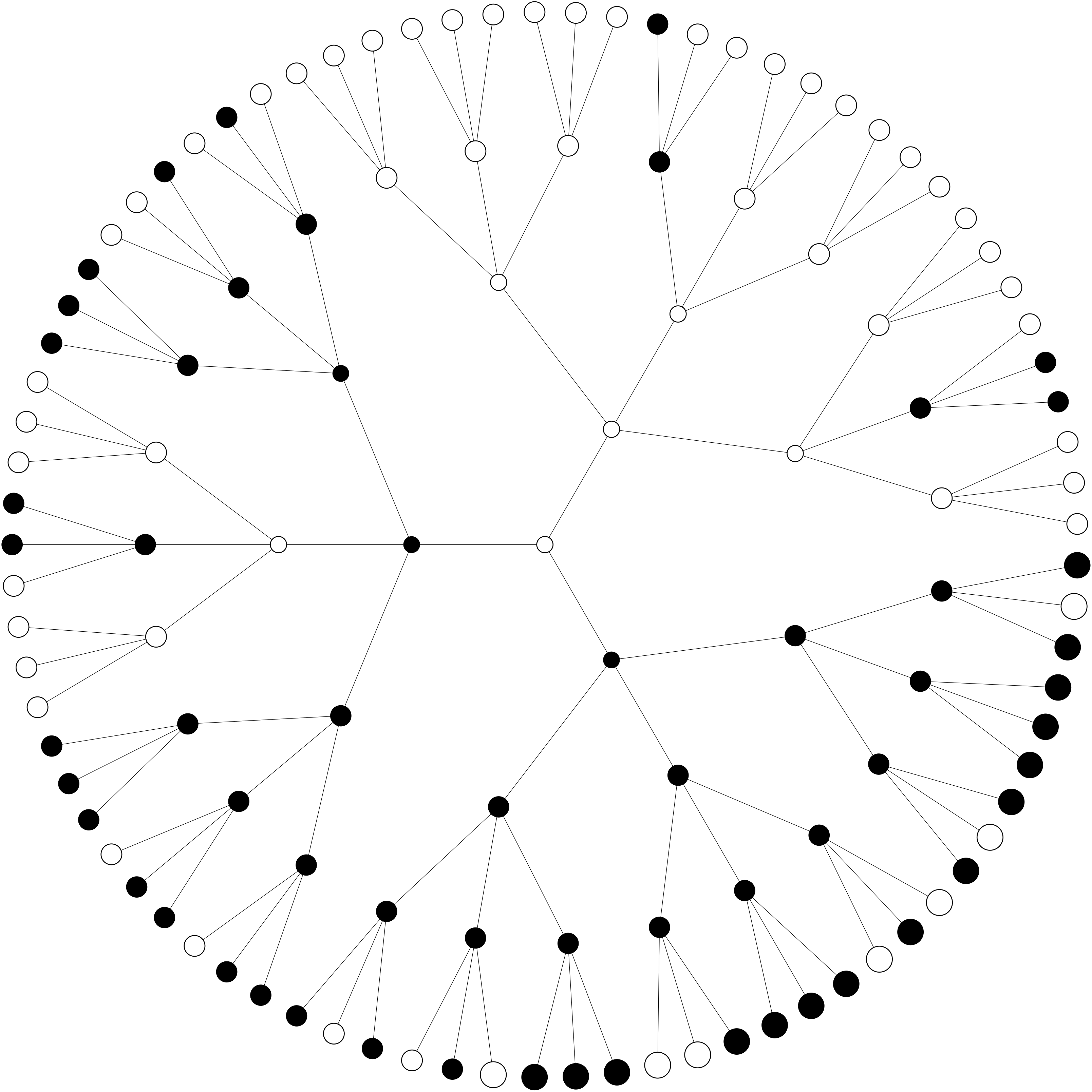}
\hfill
\includegraphics[width=0.3\textwidth]{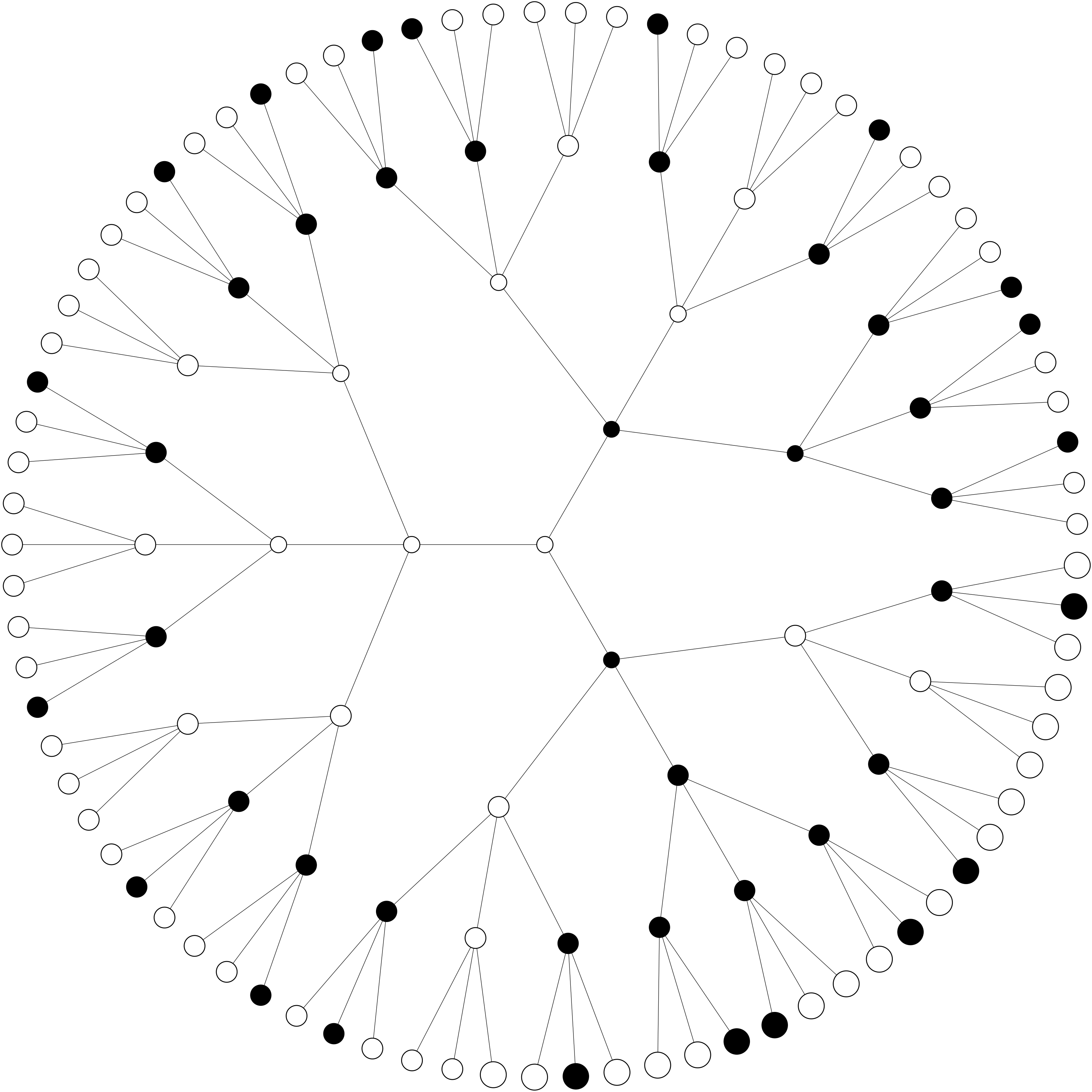}
\caption{Effect of critical rate ($\gamma$) on game tree structure. White nodes correspond to $+1$ positions, while black nodes correspond to $-1$ positions, with the root node in the center. The tree instances were generated with $\gamma=0.1$, $\gamma=0.5$, and $\gamma=1.0$, from left to right.}
\label{fig:trees}
\end{figure*}

Our model generates game trees in a top-down fashion, assigning each node a true utility of either $+1$ or $-1$. In principle, every state in real games like Chess or Go can be labeled in a similar fashion (ignoring the possibility of draws) with their true game-theoretic values. Thus, we do not lose any modeling capacity by limiting ourselves to just win-loss values. The true minimax value of a state imposes constraints on the values of its children as noted by \citeauthor{pvtrees} --- in our setting, this leads to two kinds of internal tree nodes. A \emph{forced node} is one with value $-1$ ($+1$) at a maximizing (minimizing) level. All the children of such a node are constrained to also be $-1$ ($+1$). A \emph{choice node}, on the other hand, is one with value $+1$ ($-1$) at a maximizing (minimizing) level. At least one child of such a node must have the same minimax value as its parent. Figure \ref{fig:choice-forced} presents examples of these concepts.

All the variation that is observed in the structures of different game trees is completely determined by what happens at choice nodes. As noted earlier, one child of a choice node must share the minimax value of its parent; the values of the remaining children are unconstrained. We introduce a parameter called the \emph{critical rate} ($\gamma$) that determines the values of these unconstrained children. We now describe our procedure for growing a critical win-loss tree rooted at a node $s$ with minimax value $v(s)$:
\begin{itemize}
    \item Let $S = \{s_1, s_2, \ldots, s_b\}$ denote the $b$ children of $s$.
    \item If $s$ is a forced node, then we set $v(s_1) = v(s_2) = \ldots = v(s_b) = v(s)$, and continue recursively growing each subtree.
    \item If $s$ is a choice node, then we pick an $s_i \in \{s_1, \ldots, s_b\}$ uniformly at random and set $v(s_i) = v(s)$, designating $s_i$ to be the child that corresponds to the optimal action choice at $s$. For each $s_j$ such that $j \neq i$, we set $v(s_j) = -v(s)$ with probability $\gamma$ and we set $v(s_j) = v(s)$ with probability $1 - \gamma$, before recursively continuing to grow each subtree.
\end{itemize}

We make several observations about the trees grown by this model. Firstly, one can apply the above growth procedure in a lazy manner, so that only those parts of the game tree that are actually reached by the search algorithm need to be explicitly generated. Thus, the size of the games is only limited by the amount of search effort we wish to expend. Secondly, the critical rate parameter serves as a proxy for game difficulty. At one extreme, if $\gamma=0$, then every child at every choice node has the same value as its parent --- in effect, there are no wrong moves for either player, and planning becomes trivial. At the other extreme, if $\gamma=1$, then every sub-optimal move at every choice node leads to a loss and the game becomes unforgiving. A single blunder at any stage of the game instantly hands the initiative to the opponent. Figure \ref{fig:trees} gives examples of game trees generated with different settings of $\gamma$. For the sake of simplicity, we focus on trees with a uniform branching factor $b$ in this study. 



\subsection{Critical Rates in Real Games}
\label{sec:chess-cr}

Before proceeding, we pause to validate our model by measuring the critical rates of positions in Chess (see Appendix \ref{sec:othello-cr} for similar data on Othello). We begin by first sampling a large set of positions that are $p$ plies deep into the game. These samples are gathered using two different methods: 
\begin{itemize}
    \item \emph{Light playouts:} each side selects among the legal moves uniformly at random.
    \item \emph{Heavy playouts:} each side runs a 10-ply search using the Stockfish 13 Chess engine \cite{romstad2017stockfish}, which is freely available online under a GNU GPL v3.0 license, and then selects among the top-3 moves uniformly at random.
\end{itemize} 
We approximate $v(s)$ for these sampled states using deep Minimax searches.
Specifically, we use $v(s) \approx \sign{(\tilde v_d(s))}$, where $\tilde v_d(s)$ denotes the result of a $d$-ply Stockfish search. To compute the empirical critical rate $\tilde \gamma(s)$ for a particular choice node $s$, we begin by computing $\tilde v_{20}(s)$ and $\tilde v_{19}(s')$ for all the children $s'$ of $s$ and then calculate:
$$\tilde \gamma(s) = \frac{1}{b-1} \sum_{s'} \mathds{1}[\sign{(\tilde v_{19}(s'))} \neq \sign{(\tilde v_{20}(s))}]$$
Admittedly, using the outcome of a deep search as a stand-in for the true game theoretic value of a state is not ideal. However, strong Chess engines are routinely used in this manner as analysis tools by humans, and we thus believe this to be a reasonable approach. Figure \ref{fig:critical-rates} presents histograms of $\tilde \gamma$ data collected for $p = 10$ and $p = 36$, using both light and heavy playouts. Each histogram aggregates data over $\sim 20,000$ positions.

\begin{figure*}[htb]
    \centering
    \includegraphics[width=0.70\textwidth]{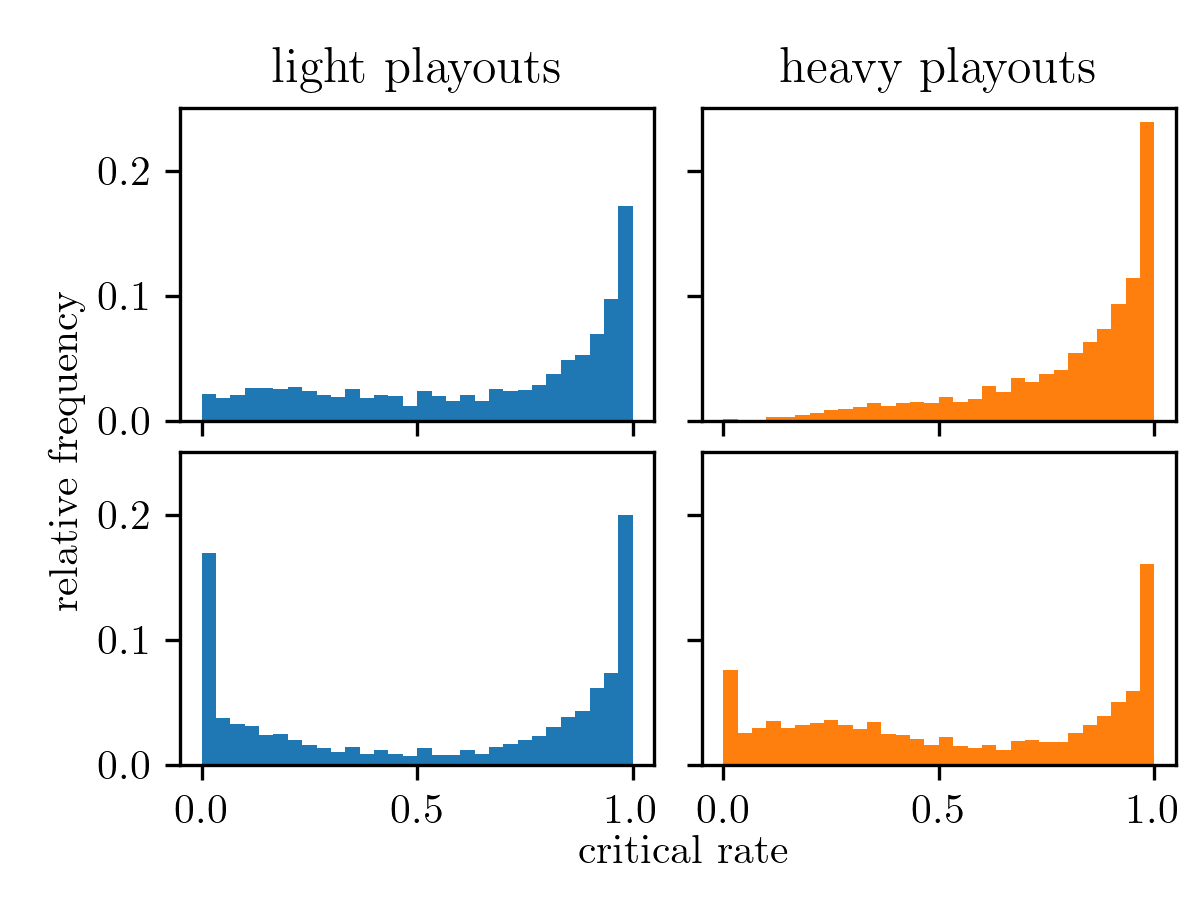}
    \caption{Histograms of empirical critical rates ($\tilde \gamma$) for Chess positions sampled $p=10$ (top row) and $p=36$ (bottom row) plies deep into the game. We sample the positions using both light playouts (left column) and heavy playouts (right column).}
    \label{fig:critical-rates}
\end{figure*}

We note that about $40$--$50\%$ of the positions sampled have $\tilde \gamma$ values higher than $0.9$, which is consistent with Chess's reputation for being a highly tactical game. We also see that the $\tilde \gamma$ values collected for Chess form a distribution that is non-stationary with respect to game progression, unlike in our proposed game tree model where $\gamma$ is fixed to be a constant. Nonetheless, we believe that this simplification in our modeling is reasonable: at deeper plies, the distribution of $\tilde \gamma$ becomes strikingly bimodal, with most of the mass accumulating in the ranges $[0.0, 0.1]$ and $[0.9, 1.0]$. This clustering means that one could partition Chess game tree into two very different kinds of subgames (with high and low $\gamma$), within each of which the critical rate remains within a narrow range.

\subsection{Heuristic Design}
\label{sec:heuristic}

Before we can run UCT search experiments on critical win-loss games, we need to resolve one more issue: \emph{how should UCT estimate the utility of non-terminal nodes?} One popular approach to constructing artificial heuristics is the additive noise model --- the heuristic estimate $h(s)$ for a node $s$ is computed as $h(s) = v(s) + \epsilon$, where $\epsilon$ is a random variable drawn from a standard distribution, like a Gaussian \cite{lustrek05,RSS_synthetic}. However, as we will see, static evaluations of positions in real games often follow complex distributions. 

\begin{figure*}[htb]
    \centering
    \includegraphics[width=0.48\textwidth]{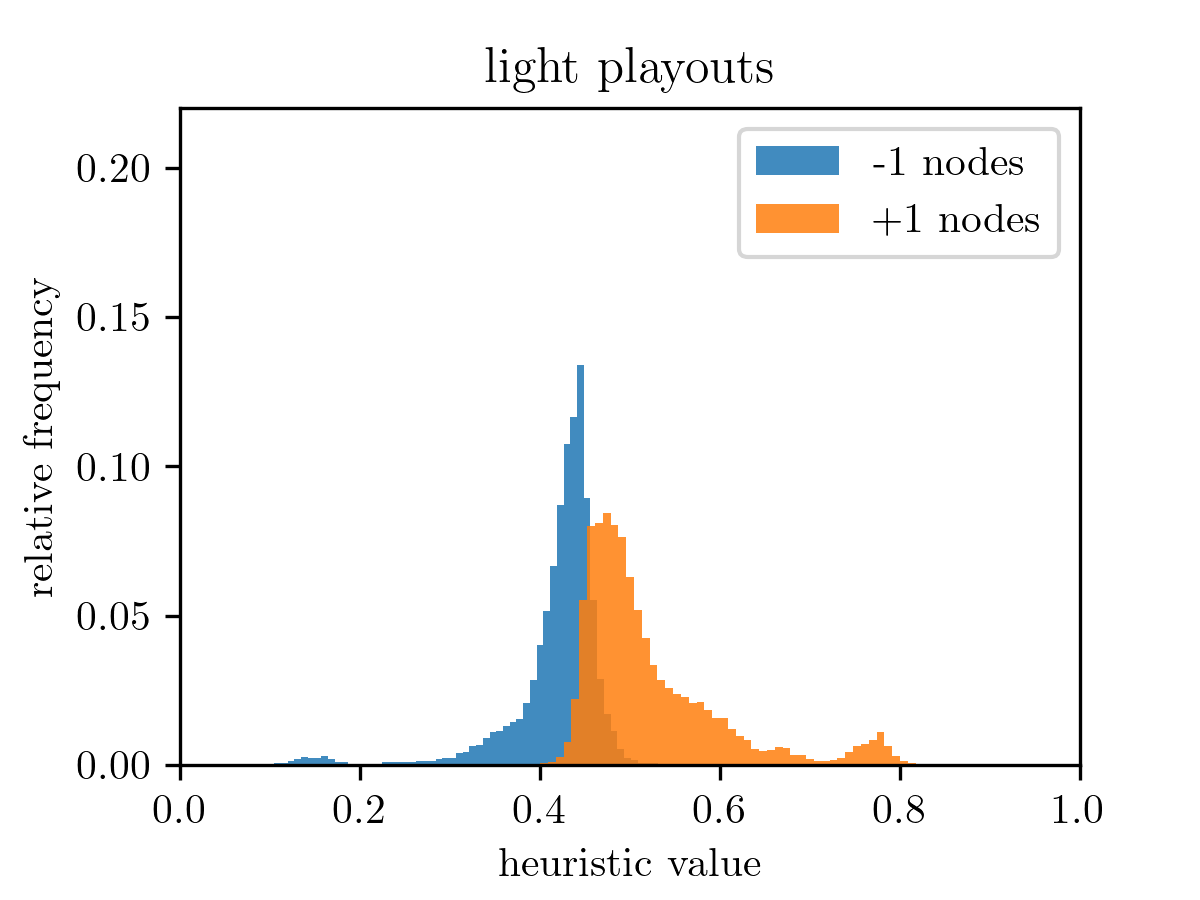}
    \includegraphics[width=0.48\textwidth]{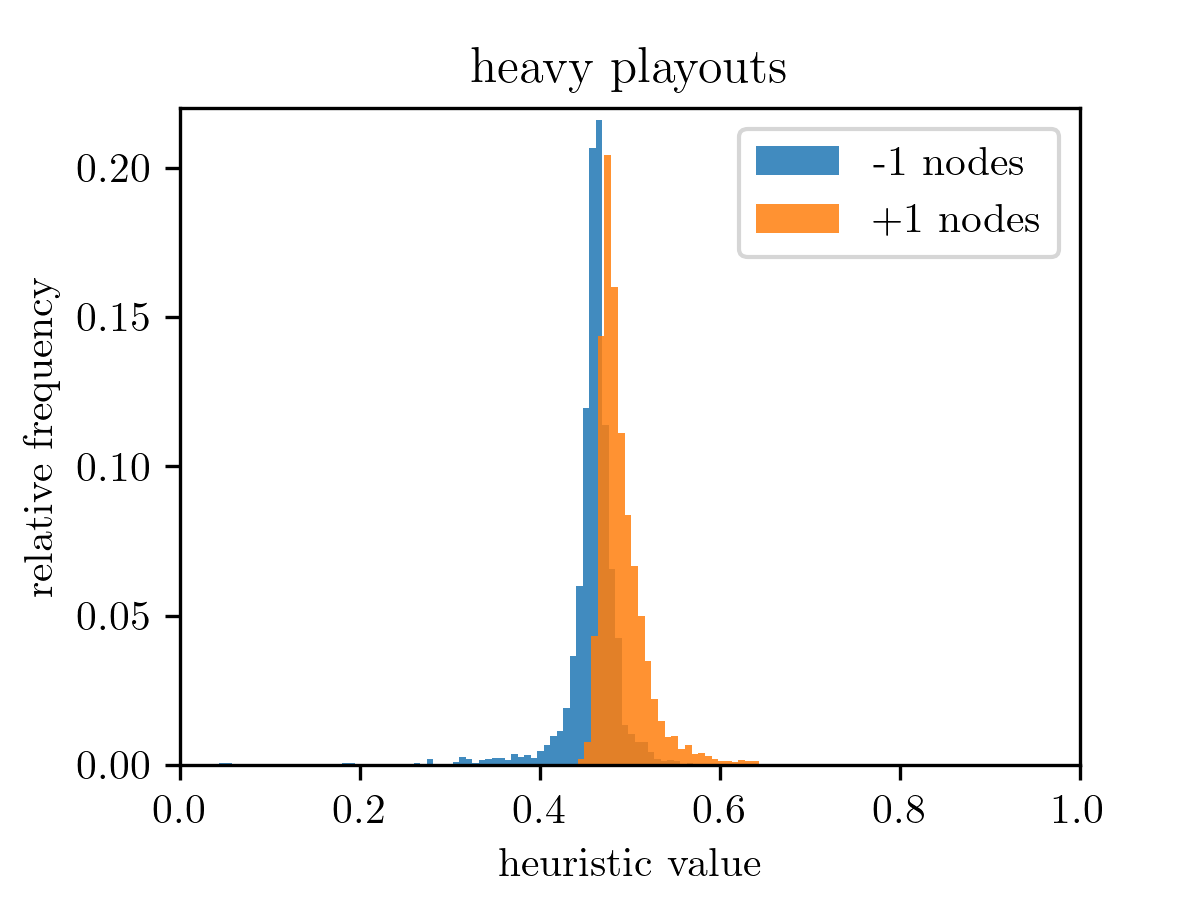}
    \caption{Distribution of Stockfish 13 static evaluations of $+1$ and $-1$ positions sampled $p=10$ plies deep into Chess. The positions are sampled using both light playouts (left) and heavy playouts (right).}
    \label{fig:heuristics}
\end{figure*}

To better understand the behavior of heuristic functions in real games, we once again turn to Chess and the Stockfish engine. We sample $\sim100,000$ positions each using light and heavy playouts for $p=10$. As before, we use $\sign{(\tilde v_{20}(s))}$ as a proxy for $v(s)$ for each sampled state $s$. We also compute $\tilde v_0(s)$ for each $s$, which we normalize to the range $[0, 1]$ --- this is the static evaluation of each $s$ without any lookahead. Figure \ref{fig:heuristics} presents histograms of $\tilde v_0(s)$, broken out by $v(s)$. A clearer separation between the orange and blue histograms (i.e., between the evaluations of $+1$ and $-1$ nodes) indicates that the heuristic is better at telling apart winning positions from losing ones. Indeed, the ideal heuristic would score every $+1$ position higher than every $-1$ position, thus ordering them perfectly.
We see in Figure \ref{fig:heuristics} that such clear sorting does not arise in practice in Chess, particularly for positions that are encountered with strong play. Moreover, the valuations assigned to positions do not follow a Gaussian distribution, and attempts to model them as such---for example, like in \cite{path_survey}---are perhaps too simplistic. However, these histograms also suggest an empirical method for generating heuristic valuations of nodes --- we can treat the histograms as probability density functions and sample from them. For example, to generate a heuristic estimate for a $-1$ node $s$ in our synthetic game, we can draw $h(s) \in [0, 1]$ according to the distribution described by one of the blue histograms in Figure \ref{fig:heuristics}. Of course, given the sensitivity of the shape of these histograms to the sampling parameters, it is natural to wonder \emph{which} histogram should be used. Rather than make an arbitrary choice, we run experiments using a diverse set of such histogram-based heuristics, generated from different choices of $p$, different playout sampling strategies, and different game domains. 

Additionally, we note that one can also use random playouts as heuristic evaluations, like in the original formulation of UCT. One advantage of our critical win-loss game tree model is that we can analytically characterize the density of $+1$ and $-1$ nodes at a depth $d$ from the root node, given a critical rate $\gamma$ and branching factor $b$. Specifically, for a tree rooted at a maximizing choice node, the density of $+1$ nodes at depths $2d$ and $2d+1$ (denoted as $f_{2d}$ and $f_{2d+1}$ respectively) are given by:
\begin{align}
f_{2d} = & ~k^{2d} + \frac{1 - k^{2d+2}}{1+k} \label{eq:even} \\
f_{2d+1} = & ~f_{2d} \cdot k \label{eq:odd}
\end{align}
where $k = 1 - \gamma~(1 - 1/b)$. We refer the reader to Appendix \ref{sec:leaf-nodes} for the relevant derivations. Access to these expressions means that we can cheaply simulate random playouts of depth $2d$: the outcome of a single playout ($\ell_1$) corresponds to sampling from the set $\{+1, -1\}$ with probabilities $f_{2d}$ and $1-f_{2d}$ respectively. For lower variance estimates, we can use the mean of this distribution instead, which would correspond to averaging the outcomes of a large number of playouts ($\ell_{\infty}$). In our experiments, we explore the efficacy of the heuristics $\ell_1$ and $\ell_{\infty}$ as well.


\section{Results}
\label{sec:results}

\subsection{Theoretical Analysis}
\label{sec:theory}

We begin with our main theoretical result and provide a sketch of the proof.\\
\begin{theorem}
    In a critical win-loss game with $\gamma=1.0$, UCT with a search budget of $N$ nodes will exhibit lookahead pathology for choices of the exploration parameter $c \geq \sqrt{\frac{N^3}{2 \log{N}}}$, even with access to a perfect heuristic.
    \label{thm:path-bound}
\end{theorem}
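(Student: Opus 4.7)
The overall strategy is to exploit the fact that the chosen lower bound on $c$ is so large that the UCB1 exploration bonus completely overwhelms the $\overbar Q$ term, which forces UCT to behave as a strict round-robin over siblings and to grow a balanced, breadth-first tree. Once this reduction is established, I would compare the decision accuracy at budget $N$ against a much smaller budget of $b+1$, at which UCT (with a perfect heuristic) visits the root and each of its children once and identifies the winning move deterministically from $\overbar Q(c_i) = v(c_i) = \pm 1$. Showing that the accuracy at budget $N$ is strictly smaller than at budget $b+1$ then establishes pathology by definition.

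For the reduction in the first step, note that for any two siblings $a, a'$ under a parent with visit count $n \le N$, with $n_a > n_{a'}$, the gap between their UCB1 bonuses is
\[
c\sqrt{\log n}\Bigl(\tfrac{1}{\sqrt{n_{a'}}} - \tfrac{1}{\sqrt{n_a}}\Bigr) \;\ge\; c\sqrt{\log N}\cdot \Omega(N^{-3/2}),
\]
with the worst case arising when $n_a = n_{a'}+1$ and both are close to $N$. Under $c \ge \sqrt{N^3/(2\log N)}$ this gap strictly exceeds the maximum possible $\overbar Q$-gap of $2$, so UCT always selects a least-visited sibling. An inductive argument on iterations then shows that after $N$ iterations the tree is balanced down to depth $D = \Theta(\log_b N)$ and each root child has been visited $\Theta(N/b)$ times.

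For the statistical step, because $h = v$, every backpropagation step records the exact value $v(s)$ of a newly expanded leaf, so $\overbar Q(c_i)$ is the empirical mean of $v(s)$ over every node currently in $c_i$'s subtree. Using a recurrence analogous to \eqref{eq:even}--\eqref{eq:odd}, I would track the density of $+1$ nodes at each depth separately in the subtree of the winning (forced) root child and in the subtree of a losing (choice) root child. For $\gamma=1$ these two sequences converge geometrically at rate $k = 1/b$ to the common asymptote $b/(b+1)$, which yields
\[
\bigl|\Exp{\overbar Q(c_{\text{win}}) - \overbar Q(c_{\text{lose}})}\bigr| \;=\; O\!\left(\frac{\log N}{N}\right),
\]
while the standard deviation of each $\overbar Q(c_i)$ induced by the randomness of the game tree is $\Omega(1/\sqrt{N})$, giving a signal-to-noise ratio of $o(1)$ at the root.

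A Berry--Esseen / CLT argument then implies that the probability of the final $\argmax$ at the root selecting a losing child is bounded below by a positive constant (in fact approaching $(b-1)/b$ as $N \to \infty$), so decision accuracy at budget $N$ is strictly less than $1$, while accuracy at budget $b+1$ equals $1$ deterministically, establishing pathology. I expect the tightest part of the argument to be the variance lower bound on $\overbar Q(c_i)$: the summands $v(s)$ are heavily correlated through shared ancestry and through the deterministic $\gamma=1$ constraints at choice nodes, so the randomness has to be reduced to the i.i.d.\ uniform choices of ``matching child'' at each choice node before a central-limit argument can be invoked cleanly, and care must be taken to separate the randomness of the game-tree instance from the deterministic depth profile of the BFS expansion.
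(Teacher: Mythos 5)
Your proposal follows essentially the same two-part route as the paper: first showing that $c \geq \sqrt{N^3/(2\log N)}$ makes the exploration bonus gap between unequally-visited siblings exceed any possible $\overbar{Q}$ gap (forcing breadth-first, round-robin expansion), and then arguing that the $+1$-node densities in the optimal (forced) and sub-optimal (choice) subtrees converge to the common limit $b/(b+1)$ so that the averaging backup cannot separate them. The only differences are cosmetic --- the paper bounds the $\overbar{Q}$ gap by $1$ (utilities normalized to $[0,1]$) rather than $2$, and its second part stops at the qualitative density-convergence argument rather than the explicit variance/CLT analysis you sketch, which would in fact make the argument more rigorous than the published version.
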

The key observation underpinning the result is that the densities of $+1$ nodes in both the optimal and sub-optimal subtrees rooted at a choice node (given by equations (\ref{eq:even}) and (\ref{eq:odd})) begin to approach the same value for large enough depths. This in turn suggests a way to lead UCT astray, namely to force UCT to over-explore so that it builds a search tree in a breadth-first manner. In such a scenario, the converging $+1$ node densities in the different subtrees, together with the averaging back-up mechanism in the algorithm, leaves UCT unable to tell apart the utilities of its different action choices. Notably, this happens even though we provide perfect node evaluations to UCT (i.e., the true minimax value of each node) --- the error arises purely due to the structural properties of the underlying game tree. All that remains to be done is to characterize the conditions under which this behavior can be induced, which is presented in detail in Appendix \ref{sec:path-bound}. Our experimental results suggest that the bound in Theorem \ref{thm:path-bound} can likely be tightened, since in practice, we often encounter pathology at much lower values for $c$ than expected. We further find that the pathology persists even when we relax the assumption that $\gamma=1.0$, as described in the following sections.

\subsection{Experimental Setup}
\label{sec:expts}

We now describe our experimental methodology for investigating pathology in UCT. Without loss of generality, we focus on games that are rooted at maximizing choice nodes (i.e., root node has a value of $+1$). We set the maximum game tree depth at $50$, which ensures that relatively few terminal nodes are encountered within the search horizon (other depths are explored in Appendix \ref{sec:depth}). We present results from a 4-factor experimental design: 2 choices of critical rate  ($\gamma$) $\times$ 3 choices of branching factor ($b$) $\times$ 2 choices of heuristic models $\times$ 5 choices of the UCT exploration constant ($c$). A larger set of results, exploring a wider range of these parameter settings, is presented in Appendices \ref{sec:depth}--\ref{sec:othello-path}. For each chosen parameterization, we generate $500$ synthetic games using our critical win-loss tree model. We run UCT with different computational budgets on each of these trees, as measured by the number of search iterations (i.e., the size of the UCT search tree). We define the \emph{decision accuracy} (denoted as $\delta_i$) to be the number of times that UCT chose the correct action at the root node, when run for $i$ iterations, averaged across the $500$ members of each tree family sharing the same parameter settings. Our primary performance metric is the \emph{pathology index} $\mathscr{P}_j$ defined as:
$$ \mathscr{P}_j = \frac{\delta_j}{\delta_{10}} $$
where $j \in \{10, 10^2, 10^3, 10^4, 10^5\}$. Values of $\mathscr{P}_j < 1$ indicate that additional search effort leads to worse outcomes (i.e., pathological behavior), while $\mathscr{P}_j > 1$ indicates that search is generally beneficial. We ran our experiments on an internal cluster of Intel Xeon Gold $5128$ $3.0$GHz CPUs with $512$G of RAM. We estimate that replicating the full set of results presented in this paper, with $96$ jobs running in parallel, would take about three weeks of compute time on a similar system. The code for reproducing our experiments is available at the following URL: \texttt{https://github.com/npnkhoi/mcts-lp}.

\begin{figure*}[tb]
    \centering
    \includegraphics[width=0.49\textwidth]{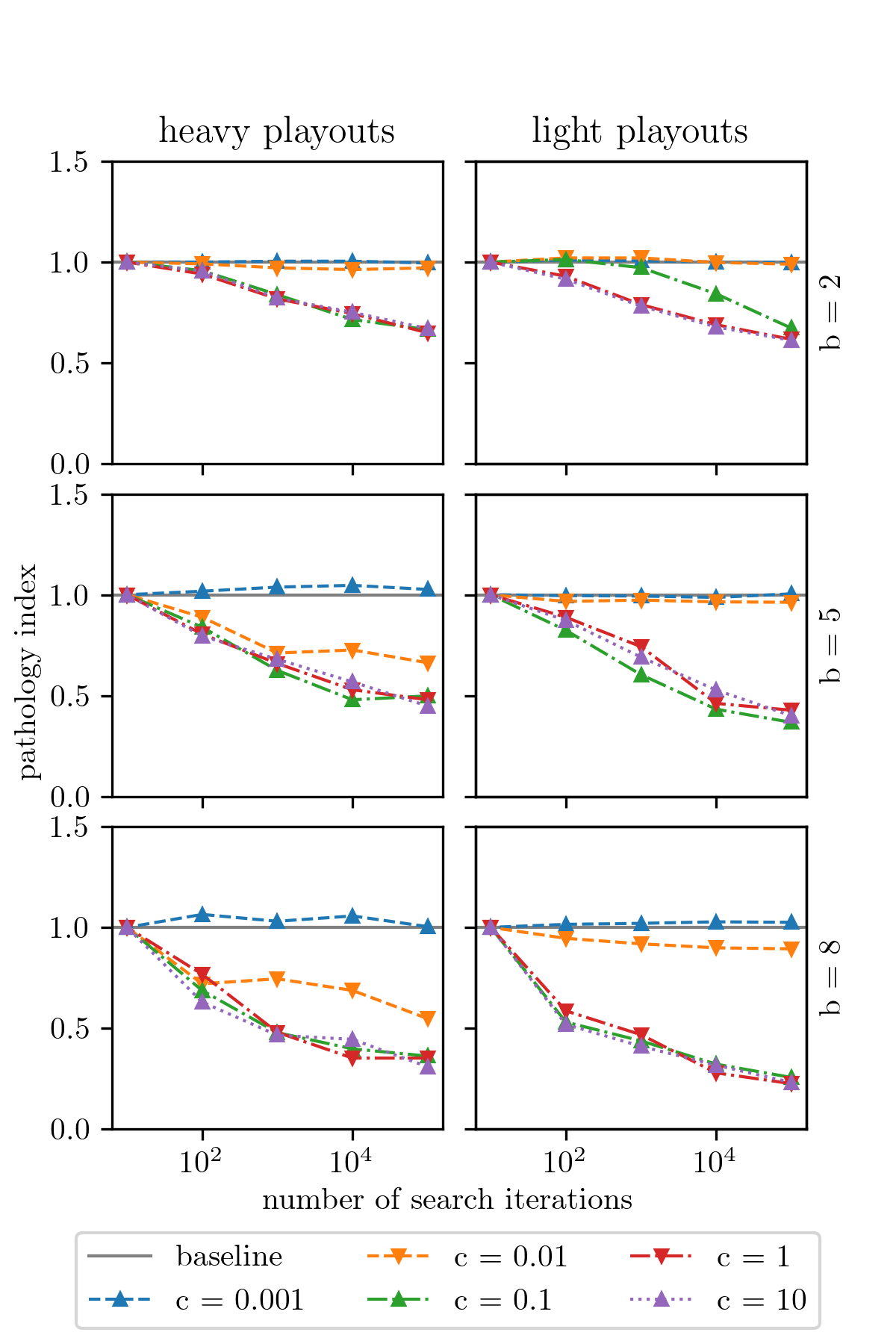}
    \includegraphics[width=0.49\textwidth]{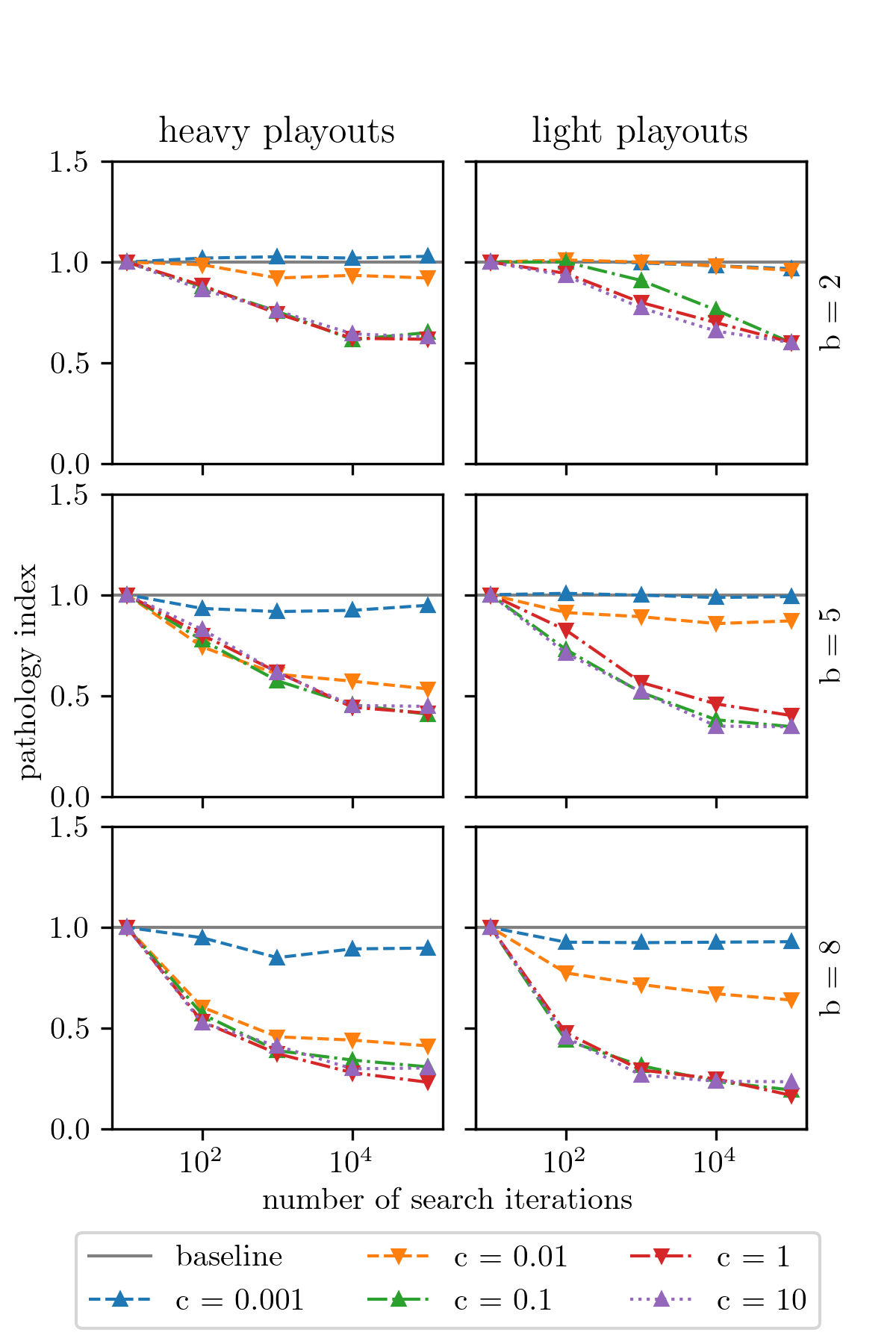}
    \caption{Measuring pathological behavior in UCT on critical win-loss games of depth $50$ with $\gamma=0.9$ (left) and $\gamma=1$ (right). The heuristic to guide UCT is constructed from histograms of Stockfish evaluations of positions sampled at depth $10$, using both light and heavy playouts. Each colored line corresponds to an instantiation of UCT with a different exploration constant. The $x$-axis is plotted on a log-scale.}
    \label{fig:uct-path}
\end{figure*}

\subsection{Discussion}
\label{sec:disc}

Figure \ref{fig:uct-path} presents our main results. Our chief finding is that the choice of $\gamma$ is the biggest determinant of pathological behavior in UCT. For games generated with $\gamma=1$, we find that UCT exhibits lookahead pathology \emph{regardless} of all other parameters --- the exploration constant, the branching factor, or how the heuristic is constructed. Appendices \ref{sec:depth} and \ref{sec:othello-path} confirm the robustness of this result using additional data collected for other game tree depths and for heuristics constructed using data collected from Othello. For smaller values of $\gamma$, the effect is not as strong and other factors begin to play a role. For example, with $\gamma=0.9$, pathological behavior is most apparent at higher branching factors and with more uniform exploration strategies (i.e., higher settings of $c$), consistent with Theorem \ref{thm:path-bound}. For $\gamma=0.5$, pathology is almost completely absent, regardless of other parameters (see Appendix \ref{sec:low-cr}).

Poor performance from UCT in a synthetic domain, particularly when it is forced to over-explore, may seem unsurprising at first glance --- so we will pause here to address these concerns, clearly elucidate the significance and novelty of these results, and to contextualize them better. Firstly, we note that lookahead pathology is distinct from poor planning performance. Practitioners routinely encounter domains where MCTS-style approaches simply fail to produce good results (for example, see \cite{RSS_traps1,RSS_critical_moves}), regardless of the size of the computational budget, but our work demonstrates a subtly different phenomenon. Namely, there are situations where UCT will initially make good decisions (when it is allowed to build a tree of size, say, 100 nodes), but that its performance will degrade as it is afforded more ``thinking'' time (on trees of size 10k nodes). Moreover, we have done so in a two-player setting, where winning strategies for a player need to be robust to any counter-move by the opponent. In practice, this means that there are an exponential number of winning leaf nodes for both players. Thus, demonstrating degenerate behavior using constructions such as those of \citeauthor{coquelin_munos}, where the optimal leaf node is strategically hidden in the midst of an exponential number of suboptimal nodes, are not possible. In fact, our model is symmetric; we use the same critical rate $\gamma$ for both players so that the game is equally (un)forgiving for both players, without stacking the deck against one player or the other. Seeing pathology arise under these conditions is thus more unexpected. We also note that more exploration has been proposed in the past as an antidote to undesirable behavior in UCT \cite{coquelin_munos}. Our results indicate that over-exploration may create new problems of its own.

For the sake of completeness, we also evaluate the performance of Minimax search with alpha-beta pruning on our critical win-loss games. These results are presented in Appendix \ref{sec:ab-path}. We find that Minimax is similarly susceptible to lookahead pathology in our setting, particularly in games where $\gamma \geq 0.7$. Games with high critical rates correspond to those with a high clustering factor $f$ \cite{clustering}, and thus, low \emph{local similarity} --- in such games, there is a lower degree of correlation among the true utilities of nodes that are near each other in the game tree, which has been identified as a key driver of pathology \cite{path_survey}. Our results are thus consistent with the findings of \citeauthor{path_survey}. One point of departure from their findings, however, is that pathology arises in our experiments even though we use a highly granular heuristic. In their work, \citeauthor{path_survey} create heuristic estimates for node utilities via an additive Gaussian noise model, whereas ours are derived from binning Stockfish's evaluation function and treating that as a distribution. This suggests that the manner in which heuristics are modeled may contribute to pathology, in addition to their resolution.

We conclude by considering the performance of UCT on the classical P-games devised by \citeauthor{pearl80}. As noted in Section \ref{sec:tree-models}, the bottom-up nature of a P-game's construction requires that we keep the entire game tree in memory during our experiments, which limits the size of the games we can study. Our results, presented in Appendix \ref{sec:pgames-path}, indicate that UCT demonstrates few signs of pathology in these games. Inducing pathological behavior in UCT thus appears to require large games with more intricate structural properties --- requirements which are satisfied by the critical win-loss games which we have introduced in this work.



\subsection{Broader Impacts}
\label{sec:broader-impacts}

This paper highlights a counter-intuitive failure mode for MCTS that deserves broader appreciation and recognition from researchers and practitioners. The fact that UCT has the potential to make worse decisions when given additional compute time means that the algorithm needs to be used with greater care. We recommend that users generate scaling plots such as those shown in Figure~\ref{fig:uct-path} to better understand whether UCT is well-behaved in their particular application domain, before wider deployment.

\section{Conclusions}

In this paper, we explored the question of whether MCTS algorithms like UCT could exhibit lookahead pathology --- an issue hitherto overlooked in the literature. Due to the shortcomings of existing synthetic game tree models, we introduced our own novel generative model for extensive-form games. We used these critical win-loss games as a vehicle for exploring search pathology in UCT and found it to be particularly pronounced in high critical rate regimes. Important avenues for follow-up work include generalizing the theoretical results presented in this paper to games where $\gamma \neq 1$ and deriving tighter bounds for the exploration parameter $c$, as well as investigating whether such pathologies emerge in real-world domains. It would also be interesting to study whether pathological behavior can be mitigated by choosing alternatives to the UCB1 bandit algorithm in the MCTS loop.

\section*{Acknowledgements}
\label{sec:ack}

The authors would like to thank Michael Blackmon for technical support, Corey Poff, Ashish Sabharwal and Bart Selman for useful discussions, and the anonymous reviewers for their feedback during the peer review process. KPNN's work was supported by the TPBank STEM Scholarship at Fulbright University Vietnam.

\bibliography{references}

\newpage

\onecolumn
\appendix
\begin{center}
    \textbf{\LARGE{Supplementary Material}}
\end{center}
\renewcommand{\thesection}{\Alph{section}}

\vspace{0.1in}


\section{On Planning in Prefix Value Trees}
\label{sec:pvtrees}

In a prefix value (PV) tree, the values of nodes are drawn from the set of integers, with positive values representing wins for the maximizing player
(henceforth, Max) and the rest indicating wins for the minimizing
player (Min). For the sake of simplicity, we disallow draws. Let
$m(v)$ represent the minimax value of a node $v$. We grow the subtree
rooted at $v$ as follows:

\begin{itemize}
  \item Let $V = \{v_1, v_2, \ldots, v_b\}$ represent the set of
  children of $v$, corresponding to action choices $A = \{a_1, a_2,
  \ldots, a_b\}$.
  \item Pick an $a_i \in A$ uniformly at random --- this is designated
  to be the optimal action choice at $v$.
  \item Assign $m(v_i) = m(v)$. If Max is on move at $v$, then $m(v_j)
  = m(v) - k, \forall j \neq i$. If Min is on move $v$, then
  $m(v_j) = m(v) + k, \forall j \neq i$.
\end{itemize}
Here, $k$ is a constant that represents the cost incurred by the
player on move for taking a sub-optimal action. The depth of the tree is controlled by the parameter $d_{max}$ and a uniform branching factor of $b$ is assumed.

The PV tree model, is an attractive object of study as despite its
relative simplicity, it captures a very rich class of games.  However,
it has one major drawback: a simple 1-ply lookahead search, using the
average outcome of random playout trajectories as a heuristic, achieves very high decision accuracies. We now explore this phenomenon a little deeper.

Without loss of generality, we restrict our attention to trees
where Max is on move at the root node $n$. Moreover, we require that
$m(n)=1$ and that $n$ has exactly one optimal child --- this ensures
that our search algorithm is faced with a non-trivial decision at
the root node. While we focus on the case where $b = 2$ in what
follows, extending our results to higher branching factors is
straightforward. We denote the left and right children of $n$ by $l$
and $r$ respectively and assume that $l$ is the optimal move. Define
$S_d(v)$ to be the sum of the minimax values of the leaf nodes in the
subtree of depth $d$ rooted at node $v$. \\
\begin{prop}
  \label{prop:playouts-good}
  $S_d(l) - S_d(r) = 2^d$ for all $d \geq 0$.
\end{prop}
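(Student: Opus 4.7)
The plan is to prove, by induction on $d$, a strengthening of the claim that isolates the dependence of $S_d$ on the minimax value of its root. Specifically, I would show that for every node $v$ in the PV tree,
$$ S_d(v) = m(v) \cdot 2^d + c_d(\pi(v)), $$
where $c_d(\pi)$ is an additive constant depending only on the depth $d$ and on which player $\pi(v) \in \{\text{Max}, \text{Min}\}$ is on move at $v$. The virtue of this strengthening is that once the $m(v)$-dependence is pinned down as a clean affine term with slope $2^d$, the sibling-difference identity falls out immediately: $l$ and $r$ are both children of the Max root $n$, hence both Min nodes sharing the same constant $c_d(\text{Min})$, and $m(l) - m(r) = 1 - (1 - k) = k$ by construction (taking $k = 1$ so that the answer is $2^d$ rather than $k \cdot 2^d$).

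The base case $d=0$ is immediate, since $S_0(v) = m(v)$ forces $c_0 \equiv 0$ for both players. For the inductive step, fix a Max node $v$ with value $V$; by the PV construction, its two children are Min nodes with values $V$ and $V-1$. Summing the inductive hypothesis over the two subtrees gives
$$ S_d(v) = \bigl(V \cdot 2^{d-1} + c_{d-1}(\text{Min})\bigr) + \bigl((V-1) \cdot 2^{d-1} + c_{d-1}(\text{Min})\bigr) = V \cdot 2^d + \bigl(2 c_{d-1}(\text{Min}) - 2^{d-1}\bigr), $$
so $c_d(\text{Max}) = 2 c_{d-1}(\text{Min}) - 2^{d-1}$ depends only on $d$ and the player, as required. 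A symmetric computation for Min nodes yields $c_d(\text{Min}) = 2 c_{d-1}(\text{Max}) + 2^{d-1}$, closing the induction.

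To conclude, applying the strengthened identity to the two children of $n$ and subtracting gives
$$ S_d(l) - S_d(r) = \bigl(m(l) - m(r)\bigr) \cdot 2^d + \bigl(c_d(\text{Min}) - c_d(\text{Min})\bigr) = 2^d. $$
The only real obstacle is conceptual rather than computational: one must resist the temptation to attack the sibling-difference identity directly by induction, which quickly entangles four grandchildren and two sign conventions. Strengthening the hypothesis to track the full affine dependence of $S_d$ on $m(v)$, separated by player-on-move, reduces the argument to routine bookkeeping and makes the sibling constants cancel by construction.
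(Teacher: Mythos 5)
Your proof is correct, but it takes a genuinely different route from the paper's. The paper inducts directly on the difference $S_d(l) - S_d(r)$: passing from depth $t$ to $t+1$ replaces each depth-$t$ leaf $u$ by two children with values $m(u)$ and $m(u) \mp k$, and because $l$ and $r$ sit at the same level of the tree the shift terms are identical in the two subtrees and cancel, giving $S_{t+1}(l) - S_{t+1}(r) = 2\bigl(S_t(l) - S_t(r)\bigr)$ in one line. You instead strengthen the induction hypothesis to an affine formula $S_d(v) = m(v)\cdot 2^d + c_d(\pi(v))$ valid at every node, verify the recurrences $c_d(\text{Max}) = 2c_{d-1}(\text{Min}) - 2^{d-1}$ and $c_d(\text{Min}) = 2c_{d-1}(\text{Max}) + 2^{d-1}$, and read off the sibling difference as a corollary. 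Both arguments are sound, and both silently normalize $k=1$ (the paper's base case $S_0(r)=0$ makes the same assumption). Your version buys something extra: it makes explicit that $S_d(v)$ depends on $v$ only through its minimax value, the depth, and the player on move, and it immediately yields the general identity $S_d(l)-S_d(r) = k\cdot 2^d$, which is exactly what the paper's closing remark about random $k$ requires. One caveat: your concluding claim that a direct induction on the difference ``quickly entangles four grandchildren'' is not borne out --- the paper's proof is precisely such a direct induction and is shorter, since the per-level shifts cancel between siblings at the same level and no grandchildren enter. (Your bookkeeping is, however, the more careful of the two: the paper's intermediate expression $S_t(l) + (S_t(l)-k)$ records the shift as $-k$ rather than $-k\cdot 2^t$, an error that happens to cancel in the difference, whereas your $2^{d-1}$ factors are tracked correctly.)
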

\begin{proof}
We proceed by induction on $d$. For the base case, $S_0(l) - S_0(r) =
1 - 0 = 2^0$ by definition. Assume the claim holds for $d = t, t \geq
0$, where $t$ is a Max level. Then, $S_{t+1}(l) - S_{t+1}(r) = (S_t(l)
+ (S_t(l) - k)) - (S_t(r) + (S_t(r) - k)) = 2(S_t(l)-S_t(r)) =
2\cdot2^d = 2^{d+1}$. A symmetric argument can be made for the case
where $t$ is a Min level.
\end{proof}
Define $P(v)$ to be the average outcome of random playouts performed
from the node $v$. If the subtree rooted at $v$ has uniform depth $d$,
then $\Exp{P(v)} = S_d(v) / 2^d$. An immediate consequence of
Proposition~\ref{prop:playouts-good} is that $\Exp{P(l)} - \Exp{P(r)}
= (S_d(l) - S_d(r)) / 2^d = 1$, for any depth $d$, i.e., in the limit,
the estimated utility of the optimal move $l$ at the root will always
be greater than that of $r$. In other words, the decision accuracy of
a 1-ply lookahead search informed by random playouts approaches $100\%$ with increasing number of playouts, independent of the depth of the tree. It is straightforward to extend this result to the case even when $k$ is a random variable, drawn uniformly at random from some set $\{1, \ldots, k_{max}\}$.\\


\section{On the Distribution of Leaf Node Values in Critical Win-Loss Games}
\label{sec:leaf-nodes}

In this section, we analyze the density of $+1$ nodes in critical win-loss game as a function of its depth ($d$), branching factor ($b$) and critical rate ($\gamma$). We limit ourselves to the case where $b$ is uniform, $b \geq 2$ and $0 < \gamma \leq 1$.

Without loss of generality, we assume that the root is a maximizing choice node (i.e., has a minimax value of $+1$). Then, the expected number of its $+1$ children is: 
$$1 + (1 - \gamma)\cdot (b - 1) = b + \gamma - b\gamma$$
We denote the expected \emph{density} of these $+1$ children (among the possible $b$ children) by $k$, where:
\begin{align}
k=\frac{b + \gamma - b\gamma}{b} = 1 - \gamma + \frac{\gamma}{b} \label{eq:k}
\end{align}
We note that by a symmetric argument, this expression also captures the density of the $-1$ children of a minimizing choice node.


Now consider a critical win-loss game instance rooted at a maximizing choice node. Let $f_n$ denote the $+1$ density at depth $n$ in this tree. We will calculate a closed-form expression for $f_n$. When $n$ is even (i.e., Max is on move), we have the following recursive relationship due to equation (\ref{eq:k}): 
$$f_{n+1} = f_{n}\cdot\left(1 - \gamma + \frac{\gamma}{b}\right)$$ 
Substituting $2d$ for $n$, we have:
\begin{align}
    f_{2d+1} = f_{2d} \cdot k \label{eq:f-odd}
\end{align}

When $n$ is odd (i.e., Min is on move), the choice nodes have value $-1$. Once again using equation (\ref{eq:k}), we derive the following recursive equation for $-1$ nodes:
\begin{align*}
    1 - f_{n+1} &= (1 - f_{n})\cdot\left(1-\gamma+\frac{\gamma}{b} \right) \\
    f_{n+1} &= 1 - (1 - f_{n})\left(1-\gamma+\frac{\gamma}{b}\right)\\
    f_{n+1} &= f_{n} \cdot k + 1 - k \\
\end{align*}
Substituting $2d+1$ for $n$, we have:
\begin{align}
    f_{2d+2} &= f_{2d+1} \cdot k + 1 - k \label{eq:f-even}
\end{align}

From equations (\ref{eq:f-odd}) and (\ref{eq:f-even}), we have:
\begin{align*}
    f_{2d+2} &= f_{2d} \cdot k^2 + (1 - k)
\end{align*}
By induction, we can then derive the following non-recurrent formula for $f_{2d}$:
\begin{eqnarray*}
    f_{2d} = f_0 \cdot (k^2) ^ d + (1 - k) \cdot \frac{1 - (k^2) ^ {d + 1}}{1 - k^2}
\end{eqnarray*}
where $f_0=1$. Simplifying, we have:
$$ f_{2d} = k^{2d} + \frac{1 - k^{2d+2}}{1+k} $$

If we now allow $d \to \infty$, we have:
\begin{align}
\lim_{d\to\infty}f_{2d} = \frac{1-k}{1-k^2} = \frac{1}{1+k} = \frac{1}{2 - \gamma + \frac{\gamma}{b}}
\label{eq:even-limit}
\end{align}
and:
\begin{align}
\lim_{d\to\infty}f_{2d+1} = \lim_{d\to\infty}f_{2d} \cdot k = \frac{k}{1+k} = \frac{1 - \gamma + \frac{\gamma}{b}}{2 - \gamma + \frac{\gamma}{b}}
\label{eq:odd-limit}
\end{align}
~\\


\section{Deriving Bounds on Pathological Behavior}
\label{sec:path-bound}

We provide a proof of Theorem~\ref{thm:path-bound} from Section~\ref{sec:theory}, which is restated below.
{
\addtocounter{theorem}{-1}  
\begin{theorem}
    In a critical win-loss game with $\gamma=1.0$, UCT with a search budget of $N$ nodes will exhibit lookahead pathology for choices of the exploration parameter $c \geq \sqrt{\frac{N^3}{2 \log{N}}}$, even with access to a perfect heuristic.
\end{theorem}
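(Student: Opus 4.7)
The plan is to combine two arguments: a selection-policy analysis forcing UCT into near breadth-first exploration under the stated bound on $c$, and a statistical analysis of backed-up Q-values in the resulting balanced tree via the density equations (\ref{eq:even}) and (\ref{eq:odd}). First, I would establish that for $c \geq \sqrt{N^3/(2\log N)}$, the UCB1 exploration bonus dominates the Q-value term at every internal node, so that UCT's selection rule always picks the least-visited child. The key calculation is on the bonus gap between two siblings with visit counts $k$ and $k+1$: using $1/\sqrt{k} - 1/\sqrt{k+1} \geq 1/(2(k+1)^{3/2})$, this gap is at least $c\sqrt{\log n(s)}/(2(k+1)^{3/2})$. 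Requiring this to exceed the maximum possible Q-value difference of $2$ (since $Q \in [-1,1]$) uniformly over $n(s) \leq N$ and $k+1 \leq N$ recovers the stated bound on $c$. The consequence is a nearly balanced search tree in which each root subtree reaches depth $d^* \approx \log_b N$, with every node at depth $d$ visited about $N/b^d$ times.

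Second, I would analyze the backed-up Q-values in this balanced tree under the perfect-heuristic assumption, where each newly expanded node contributes its true value $\pm 1$. With $\gamma = 1$ and $k = 1/b$, equations (\ref{eq:even}) and (\ref{eq:odd}) give the density of $+1$ nodes in a subtree rooted at a choice node, and a symmetric argument handles the forced-node case arising at the optimal child of the root. Viewing the node values at successive depths as a two-state Markov chain whose transition matrix depends on the parity of the level, I would show that both the optimal and sub-optimal subtrees share the same stationary distribution, with densities $1/(1+k)$ at even depths and $k/(1+k)$ at odd depths, and that convergence to these limits is geometric at rate $O(k^d)$. After negamax backup, the averaged Q-values at the optimal and sub-optimal children of the root therefore differ by at most $O(k^{d^*}) = O(1/N)$, which is swamped by the sampling noise inherent in any finite simulation budget.

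Combining these, I would argue that UCT's decision accuracy degenerates to roughly $\delta_N \approx 1/b$, while $\delta_{10}$ retains meaningful accuracy because after only ten iterations the tree is too shallow for density convergence to erase the discriminative signal that the perfect heuristic provides on the root's immediate children. Hence $\mathscr{P}_N = \delta_N/\delta_{10} < 1$, which is precisely lookahead pathology.

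I expect the main obstacle to be nailing down the specific constant $\sqrt{N^3/(2\log N)}$ rather than just the right scaling $\Theta(N^{3/2}/\sqrt{\log N})$. This requires carefully tracking how visit-count imbalances can accumulate across $N$ iterations and ensuring that the bonus differential dominates Q-value gaps uniformly over all internal nodes and all intermediate iteration counts. The Markov chain computation in Step 2 is comparatively routine once the forced-vs-choice distinction at the root's children is handled cleanly.
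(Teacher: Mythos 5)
Your proposal follows essentially the same two-part strategy as the paper's proof: first force UCT into breadth-first tree construction by making the exploration bonus dominate any possible Q-value gap between siblings, then show that the $+1$-node densities in the optimal (forced) and sub-optimal (choice) subtrees converge to the same limiting value, so the averaging backup cannot separate the root's children. The only substantive difference is in the algebra needed to recover the exact constant you flag as the obstacle: the paper bounds the numerator by $1$ rather than $2$ (heuristic values are normalized to $[0,1]$, not $[-1,1]$) and lower-bounds the bonus gap for a general pair $n(a_1) < n(a_2)$ via AM-GM combined with $\sqrt{x}+\sqrt{y}\leq\sqrt{2(x+y)}$ and $n(a_1)+n(a_2)\leq n(p)\leq N$, which yields exactly $\sqrt{2\log N/N^{3}}$ and hence the stated threshold $c \geq \sqrt{N^{3}/(2\log N)}$.
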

}
\begin{proof}
    The proof consists of two parts. First, we argue that with an appropriately chosen value for the exploration constant $c$, UCT will build a balanced search tree in a breadth-first fashion. Then, we show that such a tree building strategy will cause UCT's decision accuracy to devolve to random guessing with increased search effort. Consider a node $p$ with $b$ children. Let $a_1$ and $a_2$ denote two of these children such that $n(a_1) < n(a_2)$. Our aim is to find a value for the exploration parameter $c$ such that the UCB1 formula will prioritize visiting $a_1$ over $a_2$, regardless of the difference in their (bounded) utility estimates. This amounts to UCT building a search tree in a breadth-first fashion.\\
    
    Without loss of generality, assume $p$ is a maximizing node. For UCT to visit $a_1$ before $a_2$ on the next iteration, we must have:
    $$ \overbar{Q}(a_1) + c\sqrt{\frac{\log n(p)}{n(a_1)}} > \overbar{Q}(a_2) + c\sqrt{\frac{\log n(p)}{n(a_2)}} $$
    
    Rearranging terms, this is equivalent to the condition:
    \begin{align}
    c > \frac{\overbar{Q}(a_2) - \overbar{Q}(a_1)}{\sqrt{\log n(p)}\left(\sqrt{\frac{1}{n(a_1)}} - \sqrt{\frac{1}{n(a_2)}} \right)}
    \label{eq:bound}
    \end{align}
    
    We will now bound the right-hand side of equation~\ref{eq:bound} in terms of our search budget $N$. Firstly, since the heuristic estimates of nodes are bounded by $[0, 1]$, we know that $\overbar{Q}(a) \in [0, 1]$ for any node $a$. We can therefore bound the numerator of equation (\ref{eq:bound}) from above as:
    \begin{align}
    \overbar{Q}(a_2) - \overbar{Q}(a_1) \leq 1
    \label{eq:numerator}
    \end{align}
    
    Now we turn our attention to the denominator $D$ of equation (\ref{eq:bound}). We have:
    \begin{align*}
        D & = \sqrt{\log n(p)}\left(\sqrt{\frac{1}{n(a_1)}} - \sqrt{\frac{1}{n(a_2)}} \right) \\
        & = \sqrt{\log{n(p)}} \left[ \frac{n(a_2)-n(a_1)}{\sqrt{n(a_1)n(a_2)}(\sqrt{n(a_1)} + \sqrt{n(a_2}))} \right] \\
        & \geq \sqrt{\log{n(p)}} \left[ \frac{1}{\sqrt{n(a_1)n(a_2)}(\sqrt{n(a_1)} + \sqrt{n(a_2}))} \right] &\text{since~} n(a_2) > n(a_1) \\
        & \geq \sqrt{\log{n(p)}} \left[ \frac{1}{\frac{n(a_1)+n(a_2)}{2}(\sqrt{n(a_1)} + \sqrt{n(a_2}))} \right] & \text{by the AM-GM inequality}\\
        & \geq \sqrt{\log{n(p)}} \left[ \frac{1}{\frac{n(a_1)+n(a_2)}{2}\sqrt{2(n(a_1) + n(a_2))}} \right] & \text{since~} \sqrt{x}+\sqrt{y} \leq \sqrt{2(x+y)}\\
        & \geq \sqrt{\log{n(p)}} \left[ \frac{1}{\frac{n(p)}{2}\sqrt{2 \cdot n(p)}} \right] & \text{since~} n(a_1) + n(a_2) \leq n(p)\\
        & = \sqrt{\frac{2 \log{n(p)}}{n(p)^3}}\\
        & \geq \sqrt{\frac{2 \log{N}}{N^3}} 
    \end{align*}

    Combining this bound with equation (\ref{eq:numerator}), we conclude that:
    $$ \frac{\overbar{Q}(a_2) - \overbar{Q}(a_1)}{\sqrt{\log n(p)}\left(\sqrt{\frac{1}{n(a_1)}} - \sqrt{\frac{1}{n(a_2)}} \right)} \leq \sqrt{\frac{N^3}{2 \log {N}}}$$
    Thus, choosing a value for the exploration constant $c$ that is larger than this quantity, as per equation (\ref{eq:bound}), will force UCT to build a search tree in a breadth-first fashion.\\

    We now conclude by arguing why such a node expansion strategy will lead to lookahead pathology. Without loss of generality, consider a game tree rooted at a minimizing choice node $p$ (i.e., a minimizing node with minimax value $-1$). Let $s^*$ denote an optimal child of $p$ and let $s$ denote a sub-optimal child. This means that $s^*$ is a maximizing forced node and $s$ is a maximizing choice node. The density of winning nodes from the maximizing perspective at depth $2d$ from $s$ is then given by equation (\ref{eq:even}). Since $s^*$ is a forced node, we cannot directly use equations (\ref{eq:even}) or (\ref{eq:odd}). However, we observe that all the children of $s^*$ are minimizing choice nodes, and thus, the density of winning moves from the minimizing perspective at depth $2d$ from $s^*$ is given by $f_{2d-1}$. We can use the negamax transformation to recast this as the density of winning nodes from the maximizing perspective at depth $2d$ from $s^*$: this is given by $1 - f_{2d-1}$. For large values of $d$, we know from equations (\ref{eq:even-limit}) and (\ref{eq:odd-limit}) that $f_{2d} + f_{2d-1} = 1$, or $f_{2d} = 1 - f_{2d-1}$. In other words, the density of $+1$ leaf nodes at depth $2d$ in the subtrees rooted at $s^*$ and $s$ approach the same value, for sufficiently large $d$. Since the average of the utilities of these leaves at level $2d$ will dominate the average of \emph{all} the leaves in the respective subtrees, we conclude that in large enough search trees, UCT's estimate of $\overbar{Q}(s^*)$ will approach its estimate of $\overbar{Q}(s)$. In other words, the algorithm will not be able to tell apart optimal and sub-optimal moves as it builds deeper trees, even though we have access to the true minimax value of each node in this setting, leading to the emergence of pathological behavior.
\end{proof}



\newpage

\section{Game Engine Settings}
\label{sec:config}

We collected our Chess data using the Stockfish 13 engine, with the following configuration:\\
{\footnotesize
\begin{verbatim}
"Debug Log File": "",
"Contempt": "24",
"Threads": "1",
"Hash": "16",
"Clear Hash Ponder": "false",
"MultiPV": "1",
"Skill Level": "20",
"Move Overhead": "10",
"Slow Mover": "100",
"nodestime": "0",
"UCI_Chess960": "false",
"UCI_AnalyseMode": "false",
"UCI_LimitStrength": "false",
"UCI_Elo": "1350",
"UCI_ShowWDL": "false",
"SyzygyPath": "",
"SyzygyProbeDepth": "1",
"Syzygy50MoveRule": "true",
"SyzygyProbeLimit": "7",
"Use NNUE": "false",
"EvalFile": "nn-62ef826d1a6d.nnue"

\end{verbatim}
}

\noindent We collected our Othello data using the Edax 4.4 engine\footnote{\texttt{https://github.com/abulmo/edax-reversi}} with the default settings. Edax is freely available online under a GNU GPL v3.0 license.

\newpage

\section{Critical Rates in Othello}
\label{sec:othello-cr}

\begin{figure}[htb]
    \centering
    \includegraphics[width=0.70\textwidth]{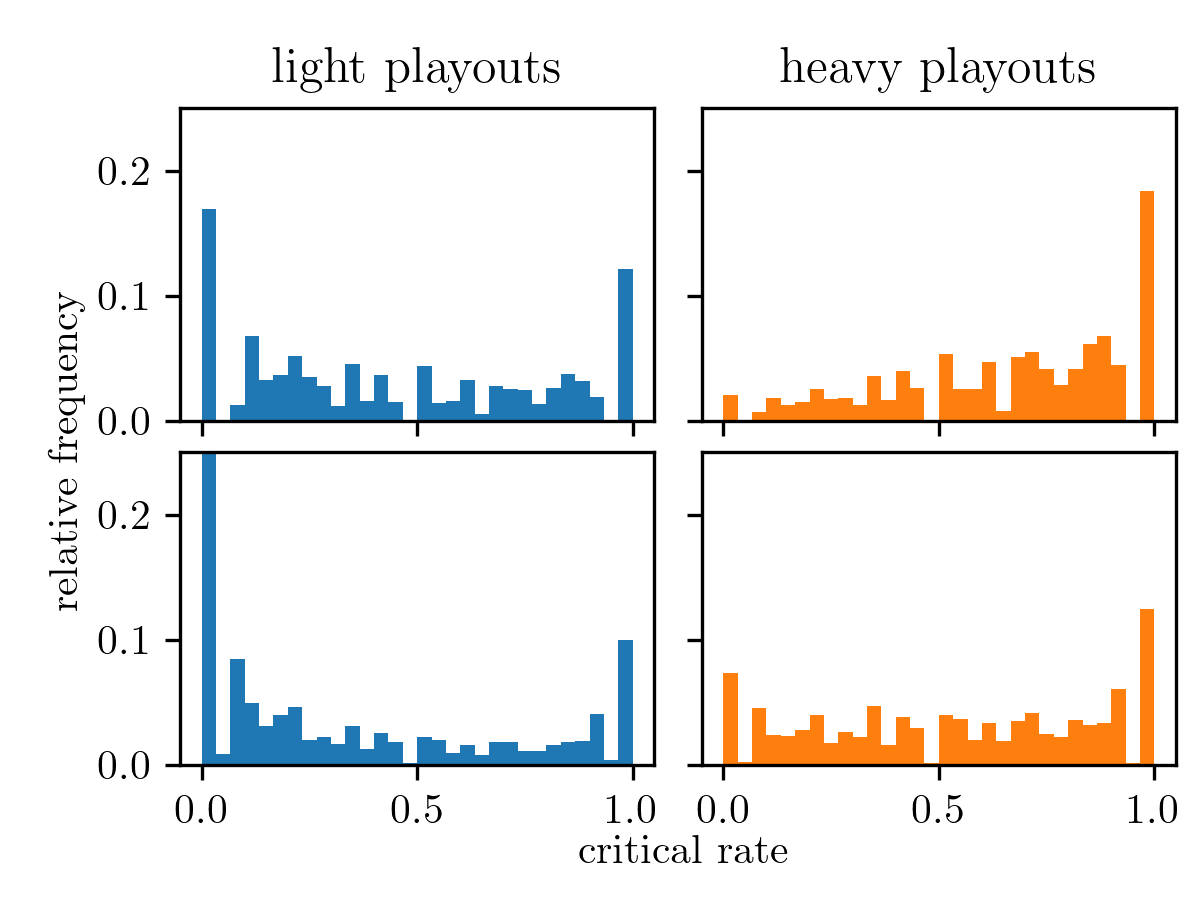}
    \caption{Histograms of empirical critical rates ($\tilde \gamma$) for Othello positions sampled $p=10$ (top row) and $p=36$ (bottom row) plies deep into the game. We sample the positions using both light playouts (left column) and heavy playouts (right column).}
\end{figure}

\newpage

\section{Impact of Maximum Tree Depth on Pathology}
\label{sec:depth}

\begin{figure}[htb]
    \centering
    \includegraphics[width=0.49\textwidth]{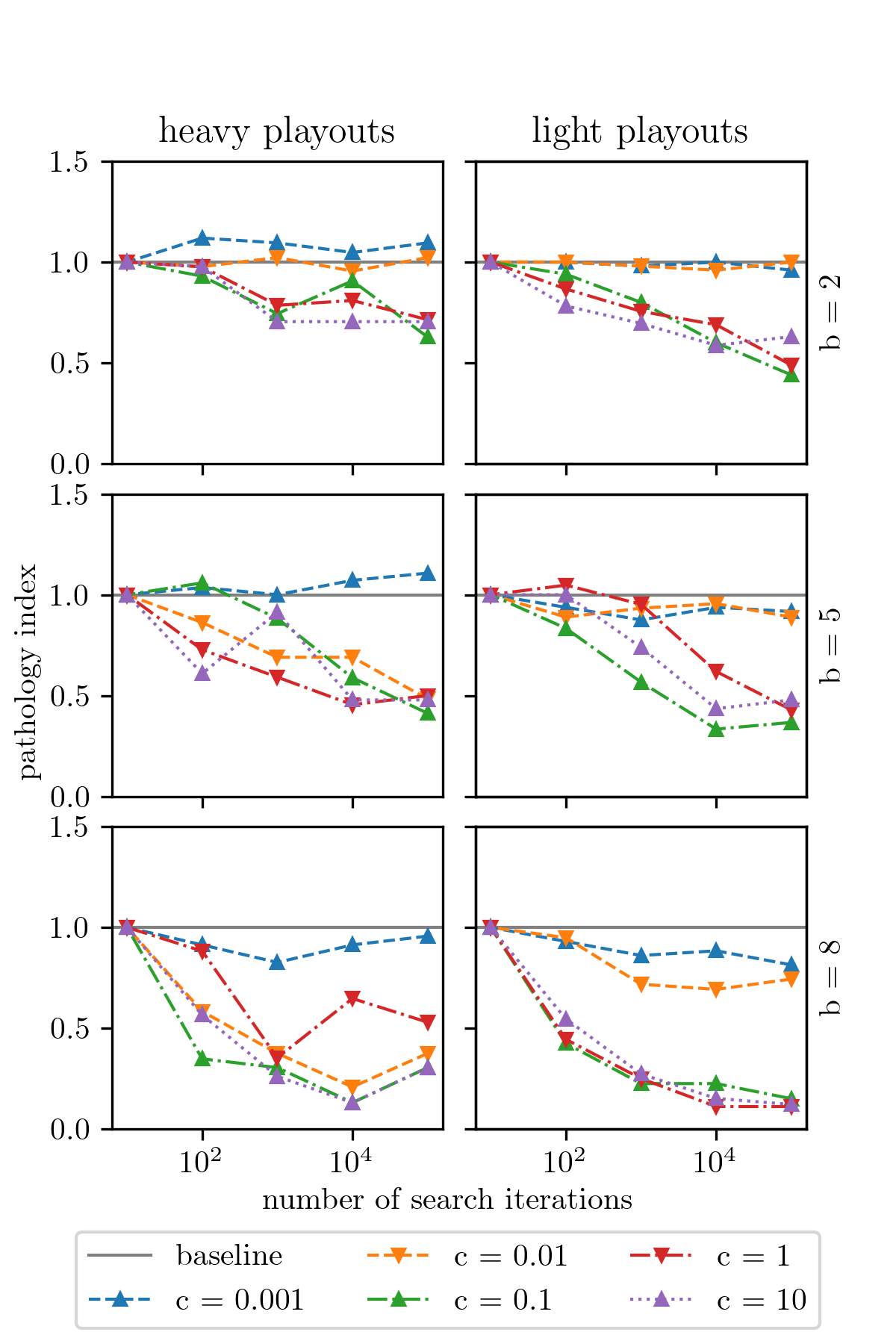}
    \includegraphics[width=0.49\textwidth]{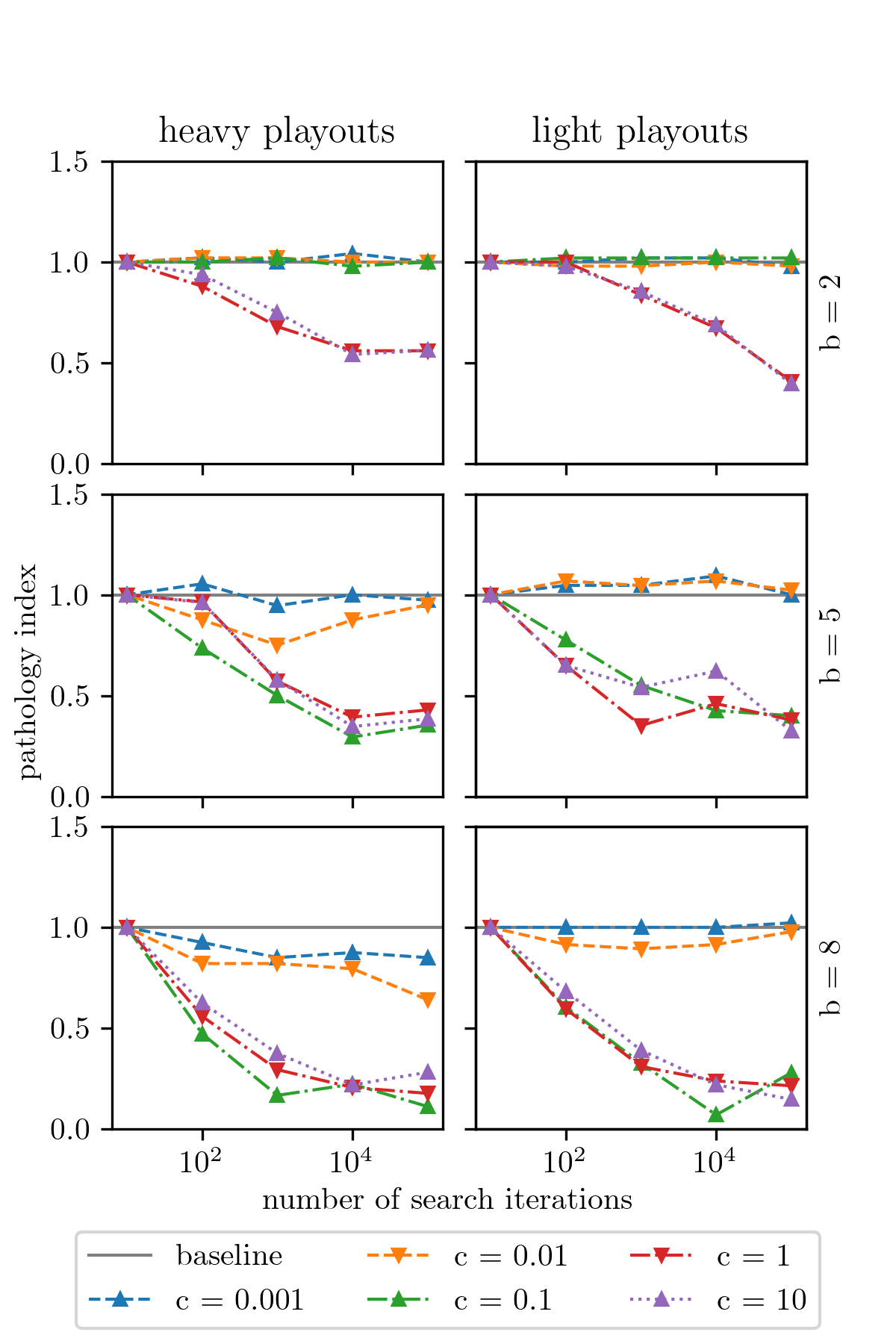}
    \caption{Measuring pathological behavior in UCT on critical win-loss games
    of depth $200$ with $\gamma=1.0$. The pair of plots on the left correspond to using a heuristic constructed from histograms of Stockfish evaluations of Chess positions sampled at depth $10$, using both light and heavy playouts. The pair of plots on the right correspond to using a heuristic constructed from histograms of Edax evaluations of Othello positions sampled at depth $10$, using both light and heavy playouts. Each colored line corresponds to an instantiation of UCT with a different exploration constant. The $x$-axis is plotted on a log-scale. We note the continued persistence of lookahead pathology.}
    \label{fig:depth-200}
\end{figure}


\begin{figure}[htb]
    \centering
    \includegraphics[width=0.49\textwidth]{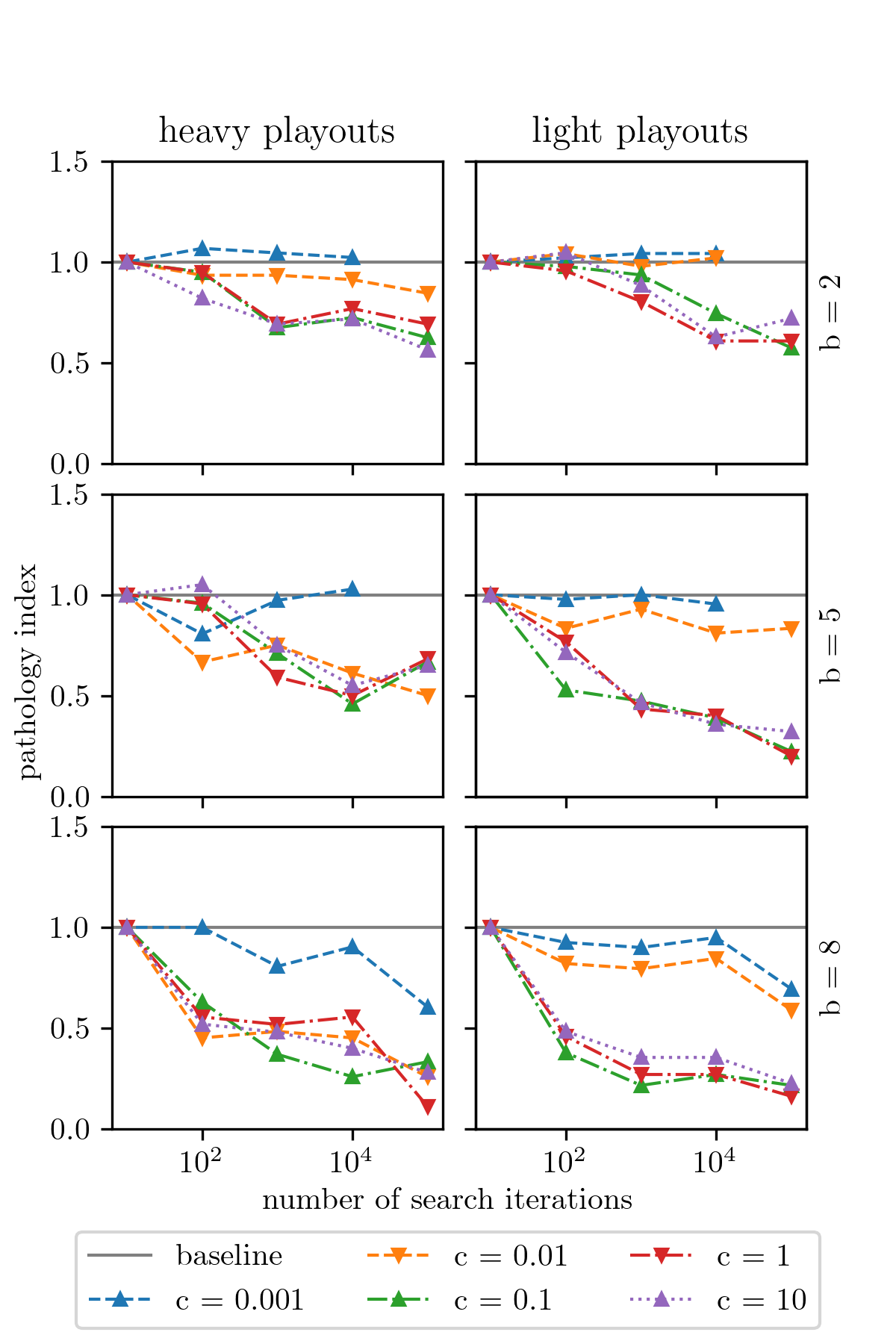}
    \includegraphics[width=0.49\textwidth]{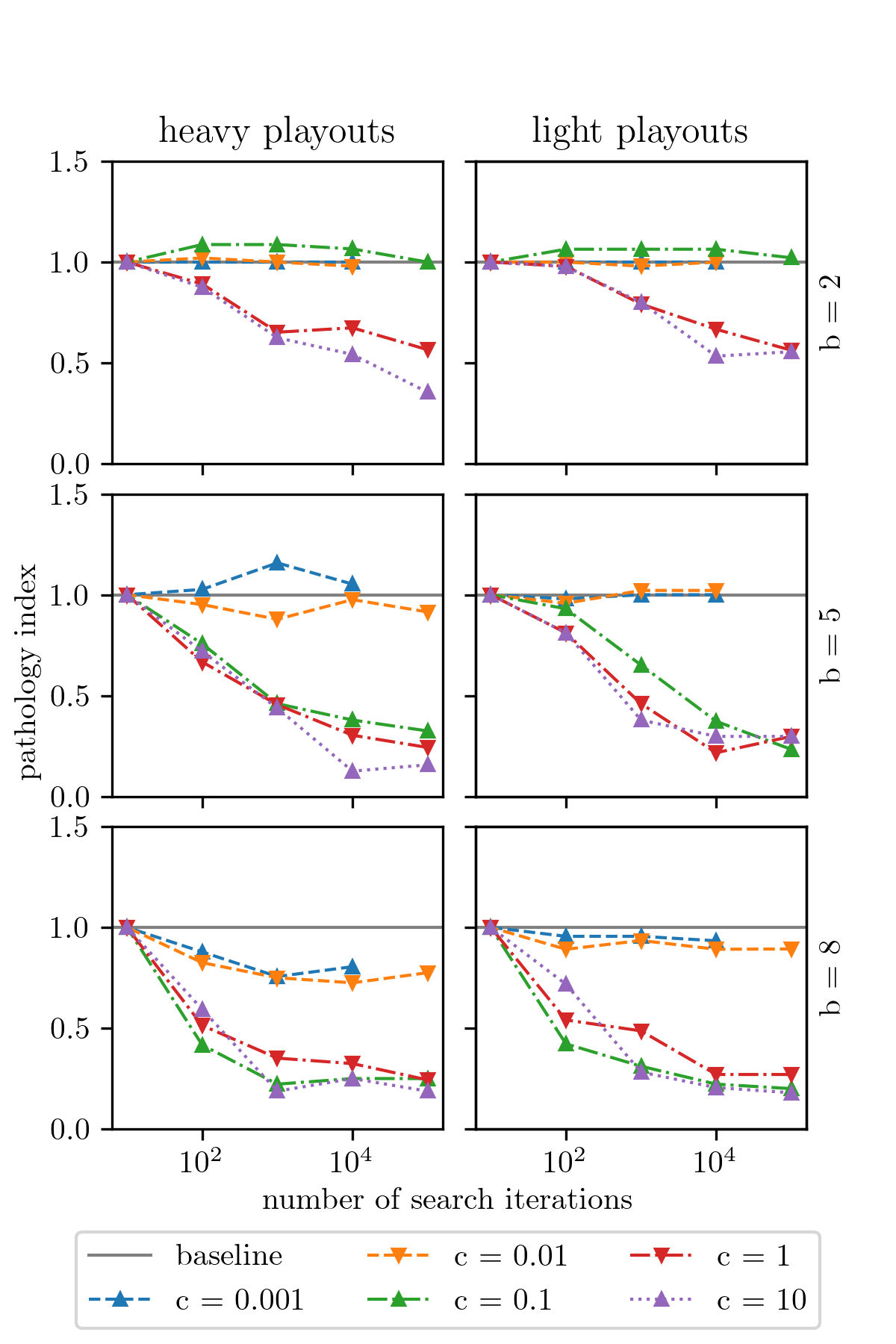}
    \caption{Measuring pathological behavior in UCT on critical win-loss games
    of depth $2000$ with $\gamma=1.0$. The pair of plots on the left correspond to using a heuristic constructed from histograms of Stockfish evaluations of Chess positions sampled at depth $10$, using both light and heavy playouts. The pair of plots on the right correspond to using a heuristic constructed from histograms of Edax evaluations of Othello positions sampled at depth $10$, using both light and heavy playouts. Each colored line corresponds to an instantiation of UCT with a different exploration constant. The $x$-axis is plotted on a log-scale. We note the continued persistence of lookahead pathology.}
    \label{fig:depth-2k}
\end{figure}


\clearpage

\section{Impact of Low Critical Rate on Pathology}
\label{sec:low-cr}

\begin{figure}[htb]
    \centering
    \includegraphics[width=0.49\textwidth]{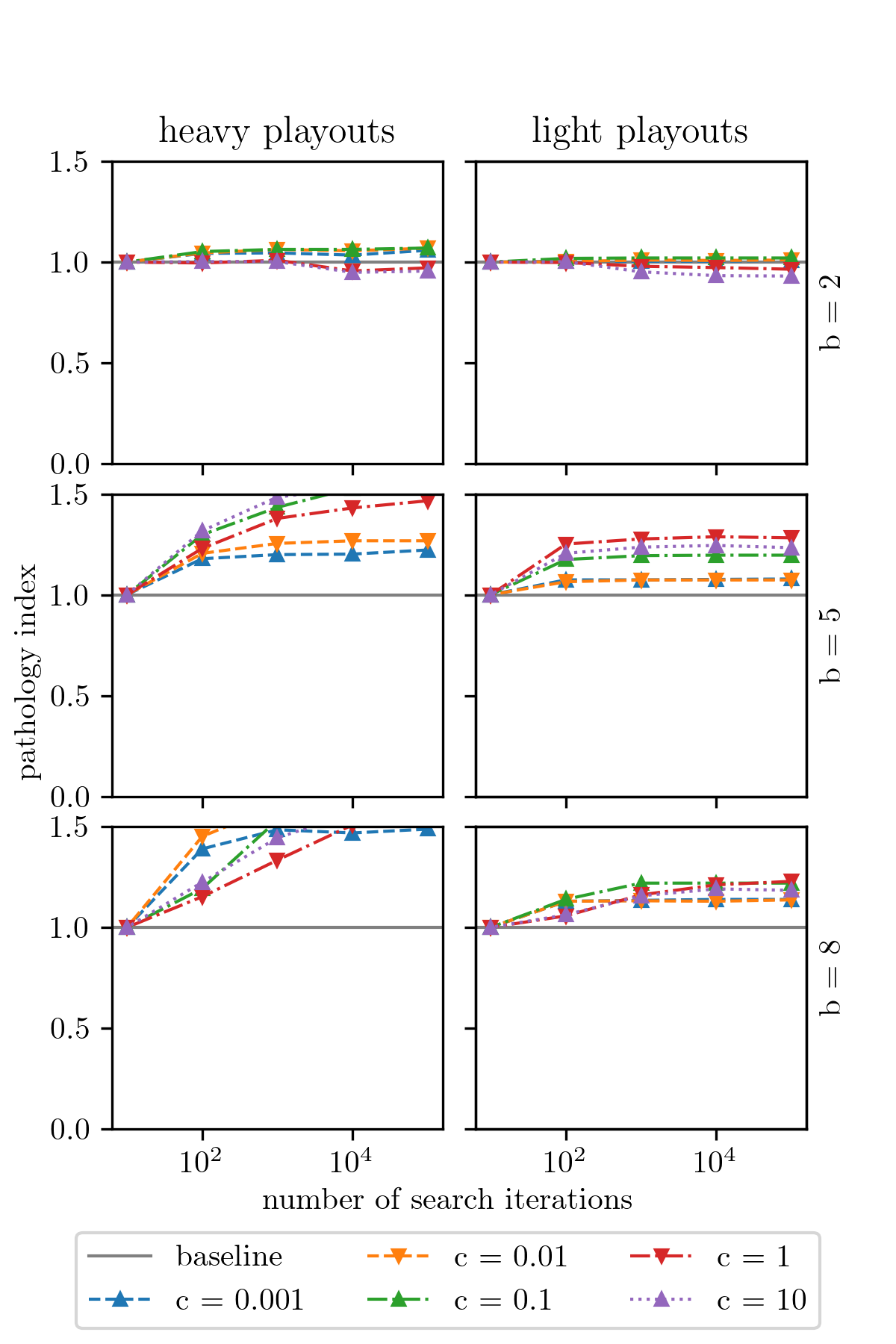}
    \includegraphics[width=0.49\textwidth]{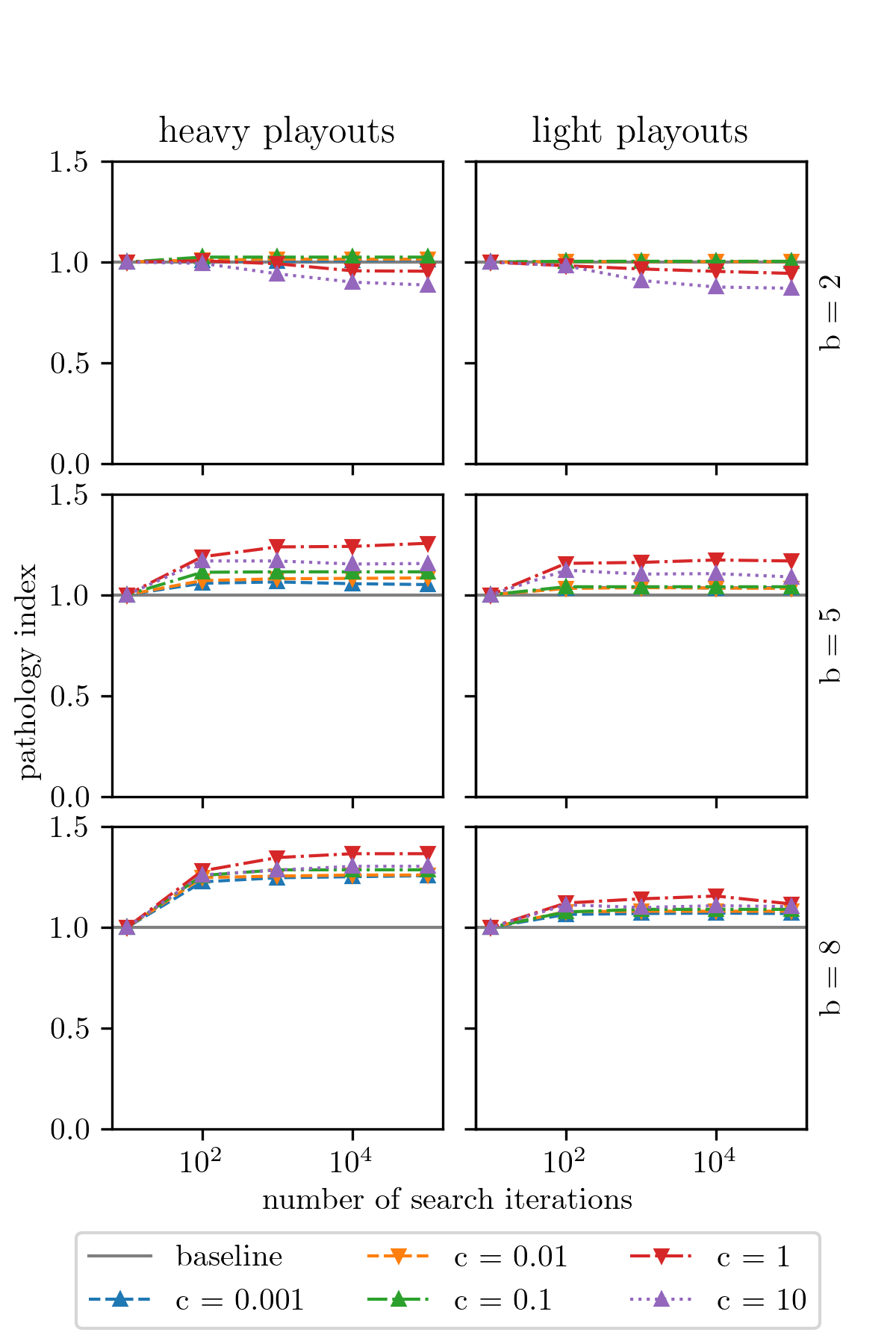}
    \caption{Measuring pathological behavior in UCT on critical win-loss games
    of depth $50$ with $\gamma=0.5$. The pair of plots on the left correspond to using a heuristic constructed from histograms of Stockfish evaluations of Chess positions sampled at depth $10$, using both light and heavy playouts. The pair of plots on the right correspond to using a heuristic constructed from histograms of Edax evaluations of Othello positions sampled at depth $10$, using both light and heavy playouts. Each colored line corresponds to an instantiation of UCT with a different exploration constant. The $x$-axis is plotted on a log-scale. We note the near complete absence of lookahead pathology in this low $\gamma$ regime.}
    \label{fig:low-cr}
\end{figure}


\clearpage

\section{Investigating Pathology with Othello-Derived Heuristics}
\label{sec:othello-path}

\begin{figure}[htb]
    \centering
    \includegraphics[width=0.49\textwidth]{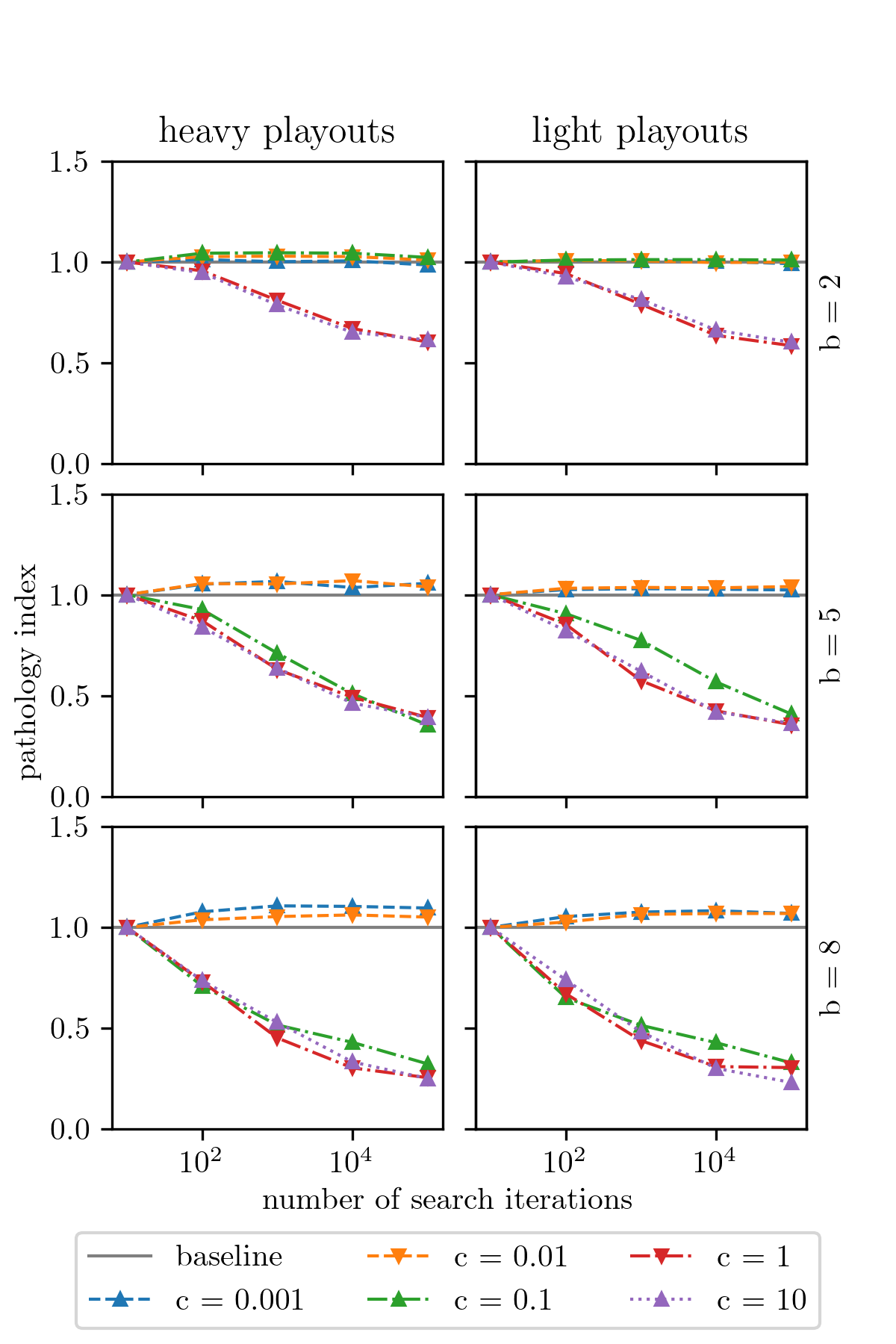}
    \includegraphics[width=0.49\textwidth]{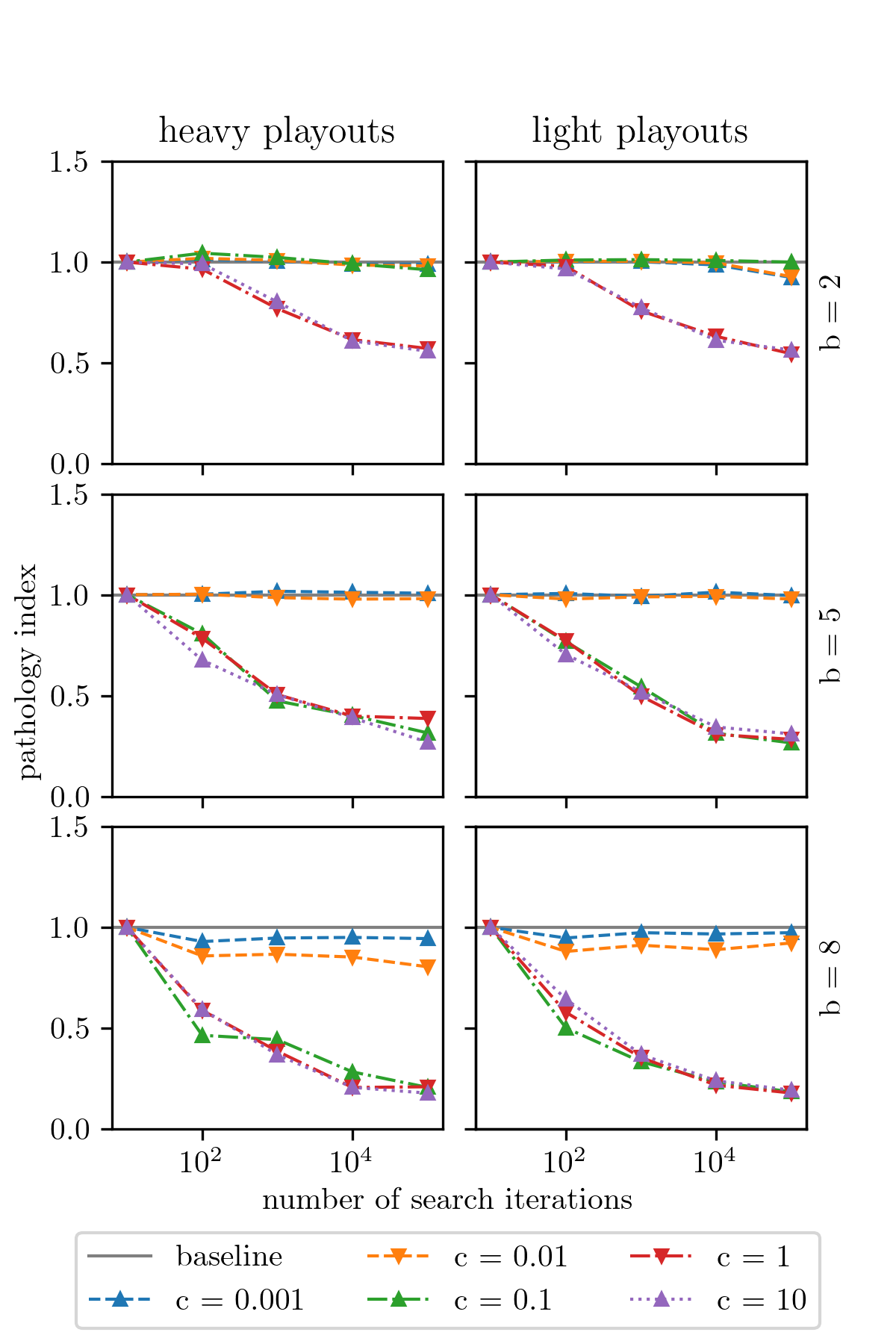}
    \caption{Measuring pathological behavior in UCT on critical win-loss games of depth $50$ with $\gamma=0.9$ (left) and $\gamma=1$ (right). The heuristic to guide UCT is constructed from histograms of Edax evaluations of Othello positions sampled at depth $10$, using both light and heavy playouts. Each colored line corresponds to an instantiation of UCT with a different exploration constant. The $x$-axis is plotted on a log-scale. We note that aside from some minor exceptions, pathological behavior generally persists even when the heuristic is sourced from a different domain.}
    \label{fig:uct-othello-path}
\end{figure}

\newpage
\section{Investigating Pathology when Using True Node Utilities}
\label{sec:perfect-path}

\begin{figure}[htb]
    \centering
    \includegraphics[width=0.55\textwidth]{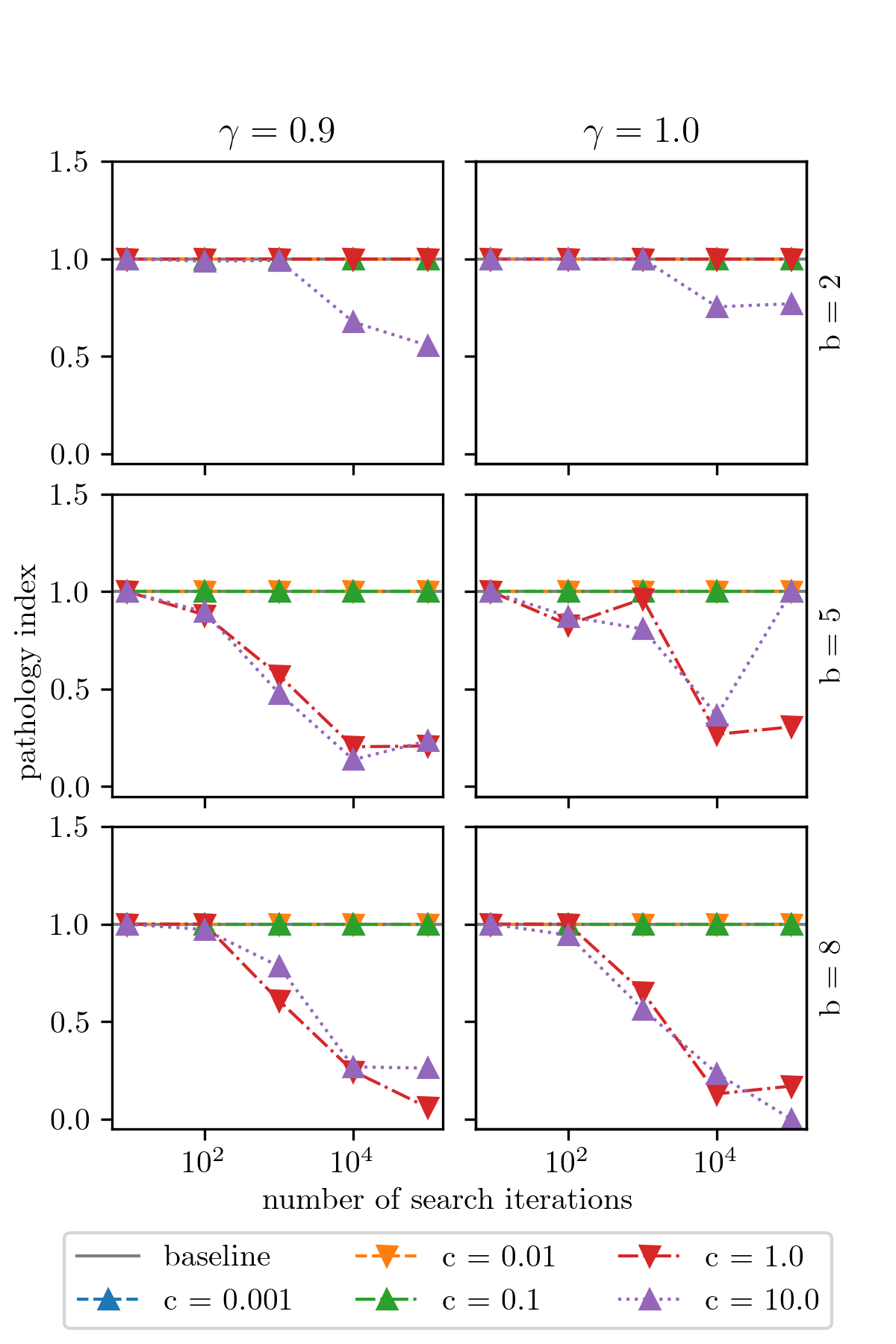}
    \caption{Measuring pathological behavior in UCT on critical win-loss games of depth $10^9$ with $\gamma=0.9$ (left) and $\gamma=1$ (right). We do not use a heuristic to guide UCT in this instance, instead relying on the true utility of each node. Each colored line corresponds to an instantiation of UCT with a different exploration constant. The $x$-axis is plotted on a log-scale. We note that lookahead pathology arises in the high-exploration regime even with access to the true game-theoretic values of nodes.}
    \label{fig:uct-perfect-path}
\end{figure}


\clearpage

\section{Investigating Pathology in Alpha-Beta Search}
\label{sec:ab-path}

\begin{figure}[htb]
    \centering
    \includegraphics[width=0.55\textwidth]{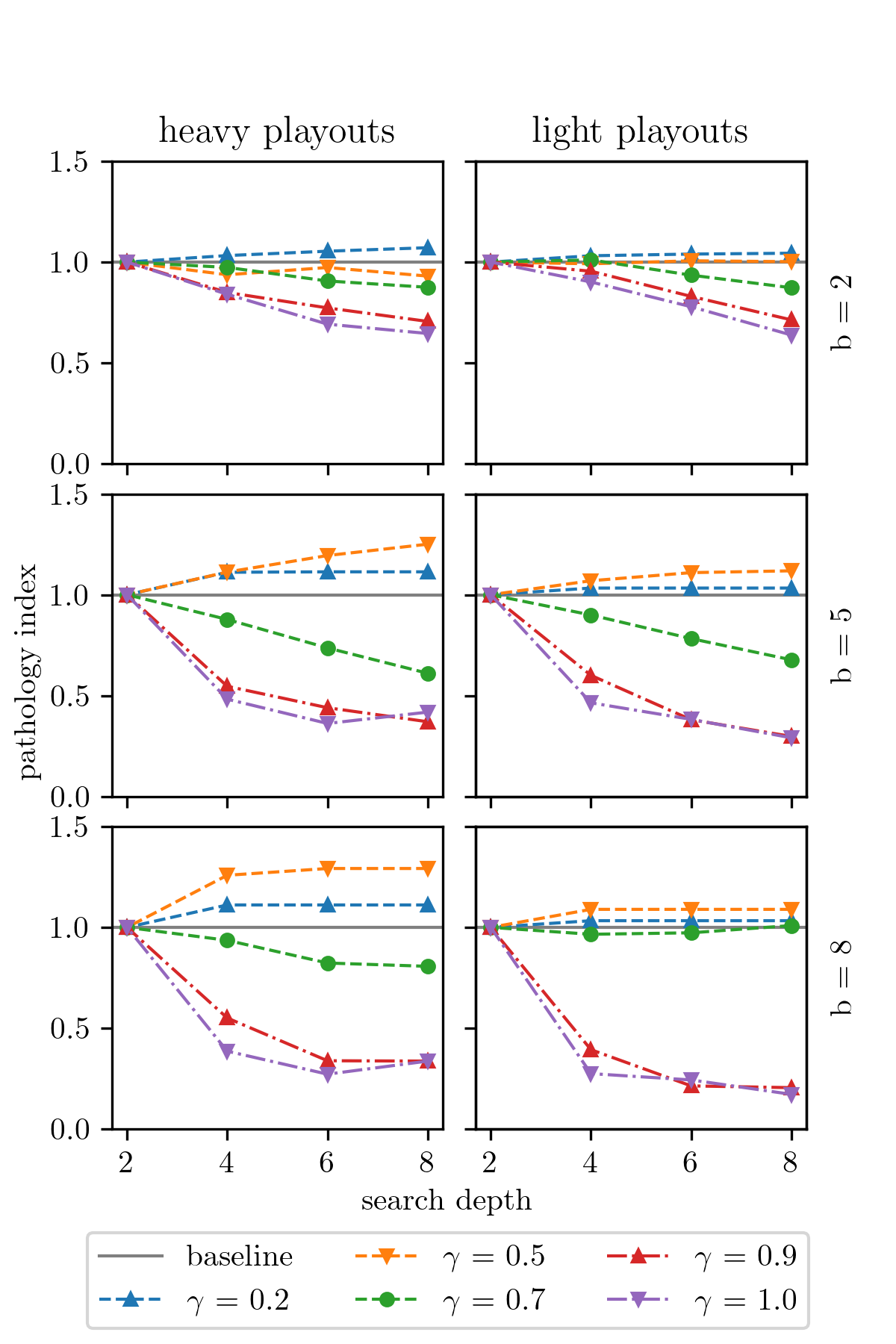}
    \caption{Measuring pathological behavior in alpha-beta search on critical win-loss games. The heuristic to guide the search constructed from histograms of Stockfish evaluations of Chess positions sampled at depth $10$, using both light and heavy playouts. The $x$-axis indicates the depth of the search tree. Each colored line corresponds to a different choice of $\gamma$. We note that pathology occurs in a wide range of parameterizations, but is most pronounced for larger branching factors and larger values of $\gamma$.}
    \label{fig:ab-path}
\end{figure}


\clearpage
\section{Investigating Pathology in UCT in P-Games}
\label{sec:pgames-path}

\begin{figure}[htb]
    \centering
    \includegraphics[width=0.70\textwidth]{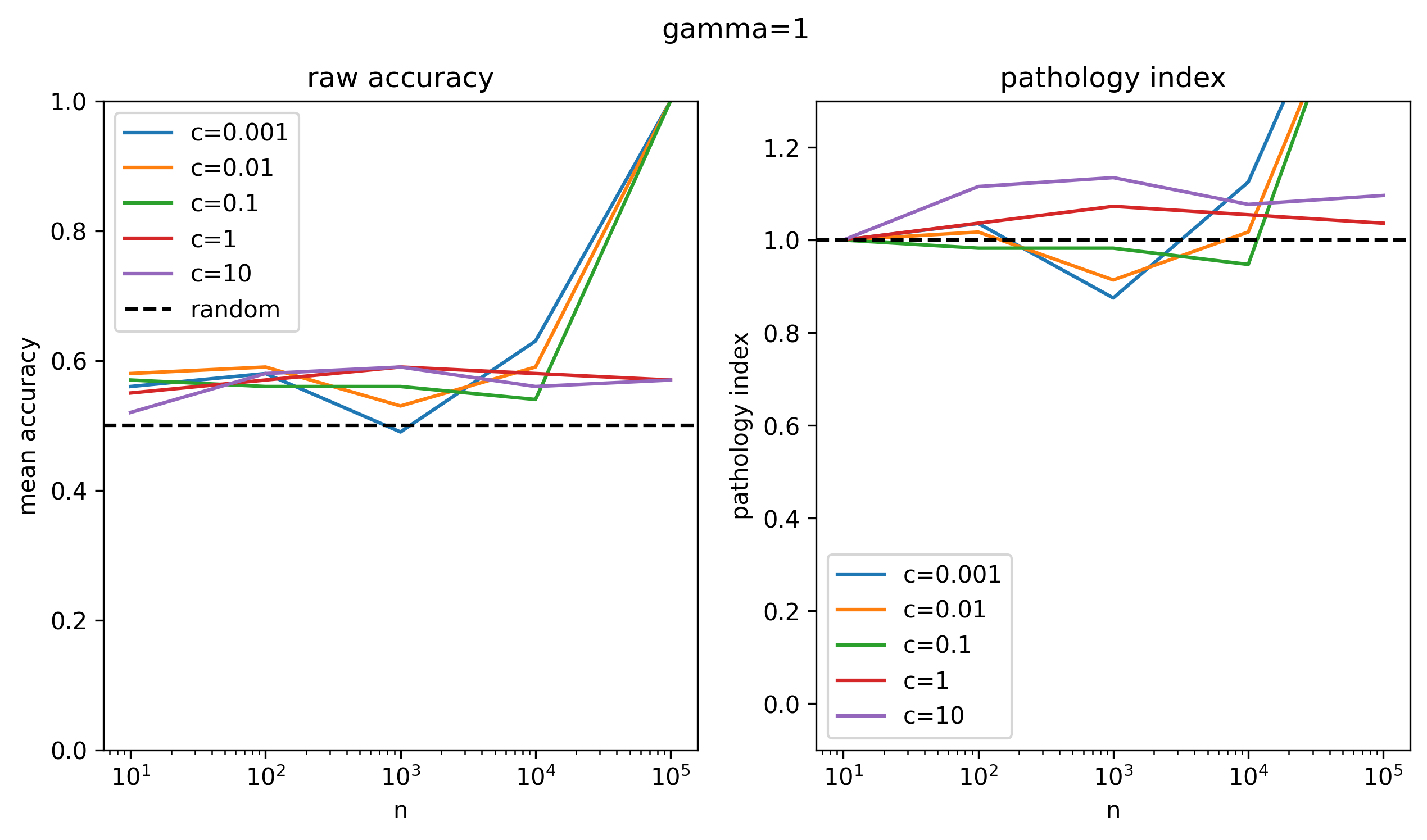}\\
    \includegraphics[width=0.70\textwidth]{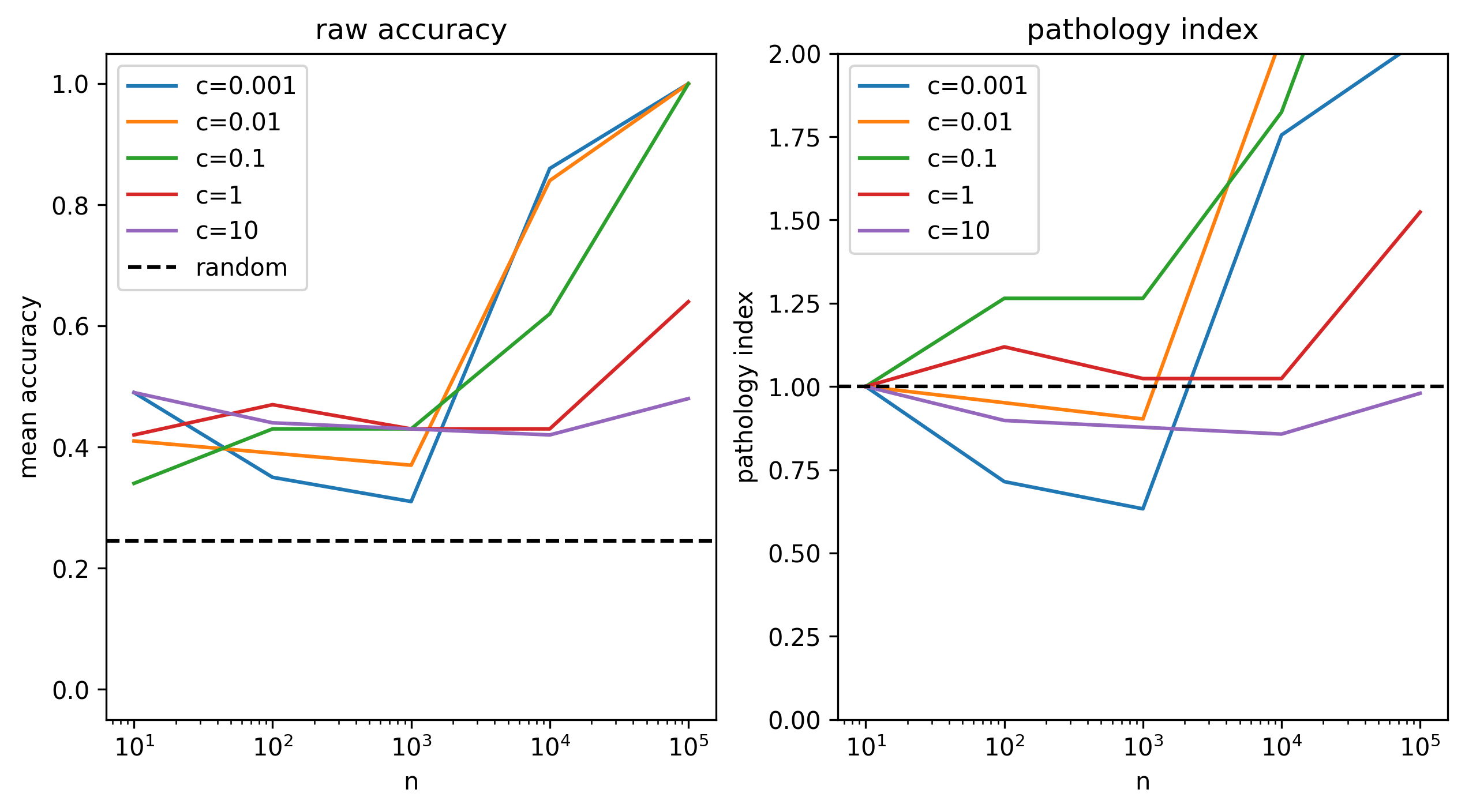}
    \caption{Measuring pathological behavior in UCT on P-games. The top panel shows results on games of depth 20 with branching factor 2. The bottom panel presents results on games of depth 8 with branching factor 5. In both parameterizations, we see that UCT exhibits little to no pathological behavior: its performance either remains (roughly) constant, or its decision accuracy improves with more search effort.} 
    \label{fig:pgames-path}
\end{figure}

\end{document}